\documentclass{article}

\PassOptionsToPackage{round}{natbib}

 \usepackage[preprint]{neurips_2026}

\usepackage[utf8]{inputenc} 
\usepackage[T1]{fontenc}    
\usepackage[backref]{hyperref}       
\hypersetup{
    colorlinks,
    linkcolor={purple},
    citecolor={blue!50!black}
}
\usepackage{url}            
\usepackage{booktabs}       
\usepackage{amsfonts}       
\usepackage{nicefrac}       
\usepackage{microtype}      
\usepackage[dvipsnames]{xcolor} 
\usepackage{amsthm}         
\usepackage{amsmath}
\usepackage{mathtools}
\usepackage{bm}  
\usepackage{bbm} 
\usepackage{nicefrac}       
\usepackage{xspace}         
\usepackage{colortbl}       
\usepackage{multirow}       
\usepackage{multicol}
\usepackage{arydshln}       
\usepackage{enumitem}
\usepackage{tabularray}
\usepackage{titletoc}       
\usepackage{newunicodechar}
\UseTblrLibrary{booktabs}
\usepackage[labelfont=bf]{caption}
\usepackage[capitalise]{cleveref}
\newtheorem{theorem}{Theorem}
\newtheorem{definition}{Definition}

\newtheorem{remark}{Remark}
\newtheorem{corollary}{Corollary}
\newcommand{\metagame}{\textsc{metagame}}
\newcommand{\metagamex}{\textsc{metagame}\xspace}

\newcommand{\Metagamex}{\textsc{Metagame}\xspace}
\definecolor{customMediumBlue}{rgb}{0.6, 0.6, 1}

\usepackage[most]{tcolorbox}
\definecolor{myred}{HTML}{ff1d62}
\definecolor{myblue}{HTML}{1b7bd0}
\newtcbox{\abox}[3][100]{
  on line,
  colback=#2!#1,
  coltext=#3,
  colframe=#2!#1,
  boxsep=0pt,
  left=2pt, right=2pt,
  top=1.5pt, bottom=1.5pt,
  arc=0pt,
  boxrule=0pt,
}

\title{
Attributions All the Way Down?\\
The Metagame of Interpretability
}

\author{%
  Hubert Baniecki\thanks{Corresponding author: \url{h.baniecki@uw.edu.pl} \hfill Code: \url{https://github.com/credibleai/metagame}} \hfill Przemyslaw Biecek\\
  University of Warsaw\\
  Centre for Credible AI, Warsaw University of Technology
  \And
  Fabian Fumagalli\\
  Bielefeld University\\
  LMU Munich, MCML
}

\begin{document}

\maketitle

\begin{abstract}
We introduce the \metagamex, a conceptual framework for quantifying second-order interaction effects of model explanations.
For any first-order attribution $\phi(f)$ explaining a model $f$, we measure the directional influence of feature $j$ on the attribution of feature $i$, denoted as meta-attribution $\varphi_{j \to i}(f)$, by treating~the attribution method itself as a cooperative game and computing its Shapley value.
Theoretically, we prove that attributions hierarchically decompose into meta-attributions, and establish these as directional extensions of existing interaction indices.
Empirically, we demonstrate that the \metagamex delivers insights across diverse interpretability applications:
(i)~quantifying token interactions in instruction-tuned language models,
(ii)~explaining cross-modal similarity in vision--language encoders, and 
(iii)~interpreting text-to-image concepts in multimodal diffusion transformers.
\end{abstract}

\section{Introduction}

Attribution is the task of quantifying how specific variables, such as input tokens, training data, or internal components, influence a machine learning model's behavior. 
Attribution methods serve as the cornerstone of key interpretability approaches, driving advancements in concept-based explainability for vision models~\citep{fel2023holistic}, circuit discovery~\citep{syed2024attribution}, data valuation~\citep{wang2025data}, and instruction-following in language models~\citep{komorowski2026attribution}.
Despite their broad utility, simple attributions based on the notions of cooperative game theory~\citep{lundberg2017unified}, gradients~\citep{sundararajan2017axiomatic}, or relevance propagation~\citep{achtibat2024attnlrp} are limited by capturing only \emph{first-order} influence of a single variable, omitting higher-order dynamics encoded by the model, such as {\color{myred}synergies} and {\color{myblue}antisynergies} (see an example in Figure~\ref{fig:figure1}).
To this end, interaction methods that measure the \emph{second-order} influence of input pairs, such as Shapley interactions~\citep{fumagalli2023shapiq} and Integrated Hessians~\citep{janizek2021explaining}, have been proposed.
However, it remains an open challenge \textbf{\emph{whether}, and \emph{how}, an arbitrary first-order attribution method can be generalized from first principles to capture higher-order interactions}.

We answer this question by revisiting the concept of \emph{explaining explanations}~\citep{janizek2021explaining}, \emph{abstracting} their serial use of integrated gradients to quantify how a particular input feature influences an attribution value of another feature. 
\citet{lundstrom2023unifying} have theoretically studied the interaction leakage into individual effects of such serial methods.
Furthermore, explicitly computing continuous second-order derivatives like input Hessians requires architectural modifications~\citep{dombrowski2019explanations,janizek2021explaining} and computational tricks for large input sizes~\citep{pramanik2026hessianenhanced}. 
Moreover, the combinatorial growth of \emph{set-based} interactions limits their comprehensibility in practice.
We thus propose a general \emph{meta}-approach that circumvents these limitations by decomposing any arbitrary first-order attribution into directional interaction terms.

\textbf{This work: The \metagamex, Meta-Shapley value, and meta-attributions.} 
We formally conceptualize the proposed framework as the \metagamex, bridging the discrete principles of cooperative game theory with the continuous dynamics of modern interpretability methods. 
Rather than explaining the model directly, we compute a modified Shapley value targeting the outputs of state-of-the-art attribution methods, such as Grad-ECLIP~\citep{chenyang2024gradeclip}, AttnLRP~\citep{achtibat2024attnlrp}, or ConceptAttention~\citep{helbling2025conceptattention}. 
Crucially, by measuring the marginal contribution of one feature/token strictly in the \emph{presence} of another, we extract \emph{directional} meta-attributions, revealing exactly how one feature/token influences the attribution of another. 
Such a directional focus resolves the interaction leakage observed in serial methods, yielding a rigorous \emph{hierarchical decomposition} of any first-order attribution method.
Theoretically, we prove that these meta-attributions elegantly split classical set-based interactions into their precise, directional components.

\textbf{Key contributions.} Our work advances literature in multiple ways:

\textbf{(1)} We deliver a principled and effective approach to generalize \emph{any} first-order attribution method to quantify second-order interactions via the Shapley value. We name these methods \emph{meta}-attributions.

\textbf{(2)} We propose a \emph{hierarchical} decomposition of attributions via \emph{directional} interactions. This perspective provides a practical way to navigate the combinatorial complexity of interactions.
Surprisingly, we show that Shapley interactions and integrated Hessians already implicitly do this~(Theorem~\ref{thm:efficiency_existing}).

\textbf{(3)} We introduce the \metagamex and meta-attributions to define a novel \emph{directional} interaction index, and we prove this serves as a directional extension of existing interaction indices~(Theorem~\ref{thm:metasv}).

\textbf{(4)} Finally, we demonstrate that \metagamex and meta-attributions offer pragmatic insights when applied to interpret large language models, vision--language encoders, and diffusion transformers.

\begin{figure}[t]
    \centering
    \includegraphics[width=\linewidth]{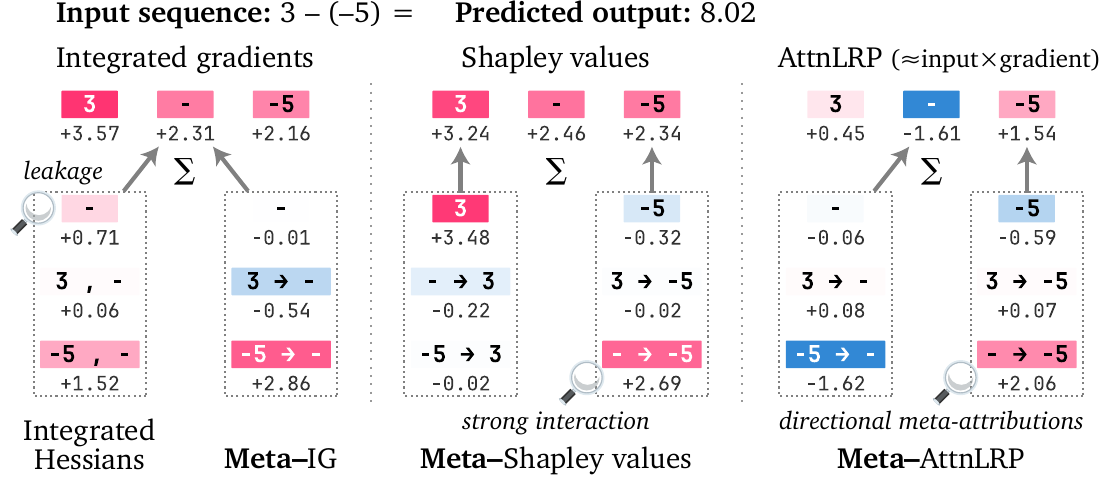}
    \caption{\textbf{Complementary interpretations of a simple transformer solving integer addition.}
    First-order attributions implicitly capture interactions between token pairs.
    Integrated Hessians highlight the undirected interaction \abox[42]{myred}{black}{\texttt{-5 , -}}, but exhibit information leakage for \abox[20]{myred}{black}{\texttt{-}}, which should be zero without knowing the subsequent number.
    AttnLRP, by contrast, assigns opposite-signed local attributions to \abox[90]{myblue}{white}{\texttt{-}} and~\abox[43]{myred}{black}{\texttt{-5}}.
    Our proposed meta-attributions assign individual effects correctly (e.g. \abox[85]{myred}{white}{\texttt{3}}, \abox[1]{myblue}{black}{\texttt{-}}) and quantify the directed effect of each input token on the first-order attribution of another token.
    Meta-Shapley values reveal that the pure effect of \abox[20]{myblue}{black}{\texttt{-5}} is small, and the attribution value is mostly determined by the interaction \abox[75]{myred}{white}{\texttt{- → -5}}.
    Meta-AttnLRP explains AttnLRP's opposite-signed attributions directionally: a minus contributes positively to the attribution of \abox[43]{myred}{black}{\texttt{-5}} when \texttt{-5} is present \abox[57]{myred}{black}{\texttt{- → -5}} (\texttt{+2.06}); conversely, removing \texttt{-5} nullifies the negative signal of the minus \abox[90]{myblue}{white}{\texttt{-5 → -}}~(\texttt{-1.62}), offsetting the first-order attribution of \abox[90]{myblue}{white}{\texttt{-}} (\texttt{-1.61}).
    For details, see Appendix~\ref{app:illustrative_example}.
    }
    \label{fig:figure1}
\end{figure}

\section{Theoretical Background}
\label{sec:background}

We consider a general feature/token attribution method $\phi_i(f)$ that quantifies the influence of feature $i \in [d] := \{1,\dots,d\}$ on a predictor $f \colon \mathbb{R}^d \to \mathbb{R}$. 
While broadly applicable, we predominantly focus on local explanations at a fixed input $x \in \mathbb{R}^d$, denoted as $\phi_i(f, x)$. 
To evaluate $f$ on subsets, we define the masked predictor $f(S; x)$, e.g. via baseline imputation $f(S; x) \coloneqq f(x_S, b_{\bar{S}})$, which retains features $S \subseteq [d]$ and replaces the remainder $\bar{S} \coloneqq [d] \setminus S$ with a baseline $b \in \mathbb{R}^d$.
 
\subsection{Existing Interaction Measures}
To interpret a model predictor $f$, first-order attribution methods allocate the prediction score to individual features. 
A foundational game-theoretic approach is the Shapley value \citep[SV,][]{shapley1953value}, which computes the average marginal contribution of feature $i$ across all possible coalitions:
$
    \phi_i^{\text{SV}}(f,x) := \sum_{S \subseteq [d] \setminus \{i\}} \frac{1}{d \cdot \binom{d-1}{\vert S \vert}} \left[ f(S \cup \{i\};x) - f(S;x) \right].
$

In neural networks, input gradients enable feature attribution. Gradient$\times$input (G$\times$I) scales the gradient by the input, while layer-wise relevance propagation \citep[LRP,][]{bach2015pixel} applies custom backpropagation rules, linked to gradient$\times$input \citep{ancona2018towards}. 
To satisfy theoretical axioms, integrated gradients \citep[IG,][]{sundararajan2017axiomatic} accumulates gradients along a straight-line path:
$
    \phi_i^{\text{IG}}(f,x) \coloneqq (x_i - b_i) \int_0^1 \frac{\partial f(b + \alpha(x-b))}{\partial x_i} \, d\alpha.
$

\textbf{Serial attributions} construct interactions by recursively applying a first-order method to its own output: $\psi_{i,j}(f,x) \coloneqq \phi_j\big(\phi_i(f, \cdot), x\big)$.
Notable examples include integrated Hessians $\psi^{\text{IH}}$ \citep[IH,][]{janizek2021explaining} and the serial Shapley value $\psi^{\text{SV}}$ \citep{lundstrom2023unifying}.
However, serial methods inherently fail to cleanly separate pure individual effects from interaction effects.

\textbf{Example: Interaction separation.} 
As illustrated in \cref{tab:interaction_separation} for $f(x) = x_1 + x_1x_2^2$, a faithful method must isolate the individual effect of $x_1$ from the interaction $x_1x_2^2$. However, both, the serial Shapley value and integrated Hessians fail to do this, leaking interaction terms into the individual components.

The \textbf{M\"obius transform} \citep{rota1964foundations}, $m_S(f,x) \coloneqq \sum_{T \subseteq S}(-1)^{\vert S \vert - \vert T\vert}f(T;x)$ for $S \subseteq [d]$ is a principled measure for separating pure interactions \citep{bordt2023from}. 
Due to its complexity with $2^d$ components, research has pivoted toward order-$k$ set-based interaction indices.

\textbf{Shapley interactions} extend the Shapley value to subsets of size $k$. For $k=2$, $\psi_{\{i,j\}}(f,x)$ quantifies the joint influence of features $i$ and $j$ beyond their individual contributions. Notable variants include $n$-Shapley values \citep{lundberg2020local,bordt2023from}, an extension of the Shapley interaction index \citep{grabisch1999axiomatic}, the faithful Shapley interaction index \citep{tsai2023faith}, and the Shapley-Taylor interaction index \citep[STII,][]{sundararajan2020shapley} given by
\begin{align*}
 \psi^{\text{STII}}_{\{i,j\}}(f,x) \coloneqq \sum_{S \subseteq [d] \setminus \{i,j\}} \frac{2}{d \cdot \binom{d-1}{\vert S \vert}}\sum_{T \subseteq \{i,j\}} (-1)^{\vert T \vert} f(S \cup T;x).
\end{align*}
At $k=2$, $\psi^{\text{STII}}_{\{i,j\}}$ separates pure individual effects and projects higher-order interactions onto pairs.

\textbf{Sum of Power} \citep[SOP,][]{lundstrom2023unifying} theoretically extends integrated gradients to resolve the separation problem. 
For pairwise interactions, SOP evaluates a Shapley value on integrated gradients within a reduced $(d-1)$-player game where player $i$ is fixed as present. 
For $i,j \in [d]$ with $i \neq j$ it can be defined by $\psi^{\text{SOP}}_{\{i\}}(f,x) \coloneqq m_{\{i\}}(f,x)$ for individuals and for interactions~via
\begin{align*}
\psi_{i,j}^{\text{SOP}}(f,x) \coloneqq \sum_{S \subseteq [d]\setminus\{i,j\}} \frac{1}{(d-1) \cdot \binom{d-2}{\vert S \vert}} \big[ \phi_i^{\text{IG}}(S \cup \{i, j\};f, x) - \phi_i^{\text{IG}}(S \cup \{i\};f, x) \big],
\end{align*}
where $\phi^{\text{IG}}_i(S;f,x)$ evaluates $\phi^{\text{IG}}_i$ restricted to the features in $S$, similarly to $f(S;x)$ for $f$.
The set-based SOP is then defined using the symmetrization $\psi^{\text{SOP}}_{\{i,j\}}(f,x) \coloneqq \psi_{i,j}^{\text{SOP}}(f,x) + \psi_{j,i}^{\text{SOP}}(f,x)$. 
In \cref{sec:metagame}, we prove that SOP follows our \emph{meta}-attribution paradigm.

\begin{table}[ht]
\centering
\caption{Analytical interactions $\varphi_{j \to i}$ for $f(x) = x_1 + I$, where $I \coloneqq x_1x_2^2$. 
Unlike serial approaches, meta-attributions perfectly isolate the individual effect $x_1$ ($\varphi_{i \to i}$) and the interaction $I$ ($\varphi_{j \to i}$). 
Furthermore, whereas set-based interactions collapse effects into undirected subsets, meta-attributions capture the directional hierarchy and sum to the first-order attribution $\phi_i$.}
\label{tab:interaction_separation}
\small
\vspace{0.5em}
\begin{tabular}{ll cc cc cc c}
\toprule
\multirow{2.5}{*}{\textbf{Type}} & \multirow{2.5}{*}{\textbf{Method}} & \multicolumn{2}{c}{\textbf{Individual}} & \multicolumn{2}{c}{\textbf{Interaction}} & \multicolumn{2}{c}{\textbf{First-order}} & \multirow{2.5}{*}{\textbf{Sum}}\\
\cmidrule(lr){3-4} \cmidrule(lr){5-6} \cmidrule(lr){7-8}
& & $\varphi_{1 \to 1}$ & $\varphi_{2 \to 2}$ & $\varphi_{2 \to 1}$ & $\varphi_{1 \to 2}$ & $\phi_1$ & $\phi_2$ \\ 
\cdashline{1-9}
\addlinespace[3pt]
 \multirow{2}{*}{Serial} & SV ($\psi^{\text{SV}}$) & $x_1 + \nicefrac{1}{4}\, I$ & $\nicefrac{1}{4}\, I$ & $\nicefrac{1}{4}\, I$ & $\nicefrac{1}{4}\, I$ & $x_1 + \nicefrac{1}{2}\, I$ & $\nicefrac{1}{2}\, I$ & $x_1+I$
\\
 & IH ($\psi^{\text{IH}}$) & $x_1 + \nicefrac{1}{9}\, I$ & $\nicefrac{4}{9}\, I$ & $\nicefrac{2}{9}\, I$ & $\nicefrac{2}{9}\, I$ & $x_1 + \nicefrac{1}{3}\, I$ & $\nicefrac{2}{3}\, I$ & $x_1+I$\\
\addlinespace[3pt]
\cdashline{1-9}
\addlinespace[3pt]
\multirow{3}{*}{Set-Based} & M\"obius ($m$) & $x_1$ & $0$ & \multicolumn{2}{c}{$I$} & $x_1$ & $0$ &$x_1$ \\
& Shapley ($\psi^{\text{STII}}$) & $x_1$ & $0$ & \multicolumn{2}{c}{$I$} & $x_1 + \nicefrac{1}{2}\, I$ & $\nicefrac{1}{2}\, I$ &$x_1+I$ 
\\
& SOP ($\psi^{\text{SOP}}$) & $x_1$ & $0$ & \multicolumn{2}{c}{$I$} & $x_1+\nicefrac{1}{3} \, I$ & $\nicefrac{2}{3} \, I$ & $x_1 + I$
\\
\addlinespace[3pt]
\cdashline{1-9}
\addlinespace[3pt]
\multirow{3}{*}{Directional} & Meta-G$\times$I & $x_1$ & $0$ & $I$ & $2 I$ & $x_1 + I$ & $2 I$ & $x_1+3I$ 
\\
& Meta-IG & $x_1$ & $0$ & $\nicefrac{1}{3}\, I$ & $\nicefrac{2}{3}\, I$ & $x_1 + \nicefrac{1}{3}\, I$ & $\nicefrac{2}{3}\, I$ & $x_1+I$ 
\\
 & Meta-SV & $x_1$ & $0$ & $\nicefrac{1}{2}\, I$ & $\nicefrac{1}{2}\, I$ & $x_1 + \nicefrac{1}{2}\, I$ & $\nicefrac{1}{2}\, I$ & $x_1 + I$\\
\bottomrule
\end{tabular}
\end{table}

\subsection{Limitations of Existing Interaction Measures}
\textbf{Combinatorial explosion in set-based interactions.} 
A major limitation of existing order-$k$ interaction indices is the resulting $\mathcal O(d^k)$ terms, which become increasingly difficult to interpret as feature dimensions or interaction orders grow. 
By collapsing interactions into undirected subsets $\{i,j\}$, these set-based approaches fail to capture the directional hierarchy necessary to explain how features sequentially build upon one another, resulting in a redundant and less intuitive set of terms.

\textbf{Limited flexibility and separation leakage.} 
Faithfully separating pure individual effects from joint interactions remains a non-trivial challenge. 
Serial methods inherently fail to do this, leaking interaction terms into the individual diagonal components. 
While Shapley interactions and SOP attempt to isolate these effects (see \cref{tab:interaction_separation} for an example), Shapley interactions treat the model as a black box, and SOP has been studied exclusively for integrated gradients.

\textbf{Structural and computational challenges.} 
Methods using input Hessians are limited by locality and architectural constraints: they rely on fine-grained patch or token-level evaluations without a natural mechanism for grouping, vanish in standard ReLU networks \citep{janizek2021explaining}, and incur prohibitive computational costs that require specialized tricks to overcome \citep{pramanik2026hessianenhanced}.

\textbf{Towards flexible, directional attributions.} 
Our \metagamex framework resolves these limitations by abstracting attributions \emph{over} attributions. 
Acting as a universal wrapper for \emph{any} first-order method, it treats the underlying attribution as a new cooperative game. 
This natively preserves the directional hierarchy ($\varphi_{j \to i}$) and rigorously disentangles pure individual effects from interactions, thereby avoiding the leakage of serial methods. 
Crucially, our methodology integrates with existing theoretical advances: we demonstrate that meta-attributions are directional extensions of existing set-based interactions, and \metagamex bridges local gradient insights with set-based perturbations.


\section{The Metagame of Interpretability}
\label{sec:metagame}

We introduce \emph{meta}-attributions $\varphi_{j \to i}$, which can be read naturally as the influence of feature $j$ on the attribution of feature $i$, as a principled framework capable of decomposing \emph{any} first-order attribution method $\phi_i$ (e.g., Meta-IG decomposes $\phi_i^{\text{IG}}$). 
To compute these, we conceptualize the \metagamex, a cooperative game evaluating $\phi_i$, and apply the Shapley value to extract interactions.

\subsection{Hierarchical Interaction Decomposition of Attributions}\label{subsec:hierarchical_efficiency}
Instead of examining all $\mathcal O(d^2)$ interactions of order $2$, we shift to a \emph{hierarchical approach}.
\begin{definition}
An interaction $\varphi_{j \to i}(f,x)$ is \textbf{hierarchically efficient} with respect to $\phi_i(f,x)$ if
\begin{align*}
    \phi_i(f,x) = \sum_{j \in [d]} \varphi_{j \to i}(f,x) = \varphi_{i \to i}(f,x) + \sum_{j \neq i} \varphi_{j \to i}(f,x).
\end{align*}
\end{definition}
Hierarchical efficiency ensures that the original first-order attribution is exactly preserved as a marginal sum of its pure individual effect $\varphi_{i \to i}(f,x)$ and its interactions with other features $\varphi_{j \to i}(f,x)$. 
Surprisingly, while not originally framed this way, several existing interaction indices already implicitly perform such hierarchical decompositions.
\begin{theorem}\label{thm:efficiency_existing}
   Existing interactions are hierarchically efficient, i.e. $\phi_i = \sum_{j \in [d]}\varphi_{j\to i}$:
   Shapley interactions decompose $\phi^{\text{SV}}$, whereas SOP decomposes $\phi^{\text{IG}}$, where $\varphi_{j \to i} := \nicefrac{\psi_{\{i,j\}}}{\vert\{i,j\}\vert}$. Moreover, serial methods decompose their first-order attributions with $\varphi_{j \to i} := \psi_{i,j}$. 
   Proofs are in Appendix~\ref{app:proofs}.

\end{theorem}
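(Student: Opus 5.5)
The plan is to prove the three claims separately, each by showing that $\sum_j \varphi_{j\to i}$ telescopes back to $\phi_i$. The tool throughout is the M\"obius representation $f(S;x)=\sum_{T\subseteq S} m_T(f,x)$.

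\textbf{Shapley interactions.} For STII, I first plan to write $\psi^{\text{STII}}_{\{i,j\}}$ in M\"obius form. Substituting $f(S\cup T;x)=\sum_{U\subseteq S\cup T} m_U$ into the discrete second derivative leaves only the $U\ni i,j$. Averaging over $S$ with the given weights then yields the known formula $\psi^{\text{STII}}_{\{i,j\}}=\sum_{U\supseteq\{i,j\}} \tfrac{2}{|U|}\cdot\binom{|U|}{2}^{-1}\cdot\tfrac{|U|(|U|-1)}{2}\cdot\tfrac{1}{|U|-1}\,m_U$. Put more simply, each $m_U$ with $|U|\ge2$ is split equally among its $\binom{|U|}{2}$ pairs, and $\psi_{\{i\}}=m_{\{i\}}$. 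I would verify this projection property directly rather than cite it.

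Then $\varphi_{j\to i}=\tfrac12\psi_{\{i,j\}}$ gives each $i\in U$ a share $\tfrac{1}{2}\binom{|U|}{2}^{-1}(|U|-1)=\tfrac{1}{|U|}$ of $m_U$, summed over its $|U|-1$ partners $j\in U$. Together with $\varphi_{i\to i}=m_{\{i\}}$, this gives $\sum_j\varphi_{j\to i}=\sum_{U\ni i} m_U/|U|=\phi_i^{\text{SV}}$. For the other Shapley interaction variants ($n$-Shapley at order 2, FSII), I would use their analogous M\"obius distribution formulas.

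\textbf{SOP.} Here $\sum_{j\ne i}\tfrac12\psi^{\text{SOP}}_{\{i,j\}}$ would involve both $\psi_{i,j}$ and $\psi_{j,i}$. I therefore expect the intended directional reading to be $\varphi_{j\to i}=\psi^{\text{SOP}}_{i,j}$. This agrees with Table~\ref{tab:interaction_separation}: $\tfrac13 I$ and $\tfrac23 I$ symmetrize to $I$, and halving gives the equal split only when summing over both directions.

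I would check both readings, but the core identity is the following. Define the game $g(S):=\phi_i^{\text{IG}}(S\cup\{i\};f,x)$ on the $d-1$ players $[d]\setminus\{i\}$. Then $\psi^{\text{SOP}}_{i,j}$ is precisely the Shapley value of $j$ in $g$. By Shapley efficiency,
\[
\sum_{j\ne i}\psi^{\text{SOP}}_{i,j}=g([d]\setminus\{i\})-g(\emptyset)=\phi_i^{\text{IG}}(f,x)-\phi^{\text{IG}}_i(\{i\};f,x).
\]
The last term is IG on a function of $x_i$ alone, which equals $f(\{i\})-f(\emptyset)=m_{\{i\}}=\psi^{\text{SOP}}_{\{i\}}$ by IG completeness. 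This yields hierarchical efficiency. If the halved symmetrized form is really intended, I would instead use the fact that IG distributes monomials in proportion to their exponents. This is the main obstacle: reconciling the stated $\nicefrac{\psi_{\{i,j\}}}{2}$ convention with SOP's directional parts. I would first try to resolve it via the table's values.

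\textbf{Serial methods.} The plan is a one-line efficiency argument. Since $\psi_{i,j}=\phi_j(\phi_i(f,\cdot),x)$, summing over $j$ and applying efficiency (completeness) of the outer method to the function $h=\phi_i(f,\cdot)$ gives $\sum_j\psi_{i,j}=h(x)-h(b)$. For IG and SV, the first-order attribution vanishes at the baseline, since $x_i=b_i$ there (the factor $(x_i-b_i)$, respectively a null player). Hence $h(b)=0$, and the sum equals $\phi_i(f,x)$, with the index roles as in the statement.
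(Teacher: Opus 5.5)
Your arguments for the serial methods (outer efficiency plus $h(b)=0$) and for STII (M\"obius form $\psi^{\text{STII}}_{\{i,j\}}=\sum_{U\supseteq\{i,j\}}\binom{|U|}{2}^{-1}m_U$, then counting the $|U|-1$ partners of $i$ to reach $\sum_{U\ni i}m_U/|U|$) are correct and are the paper's. For $n$-Shapley values and FSII the paper follows the route you only sketch. FSII is not an equal split, though. The singleton term carries corrections $-\frac{2(|T|-2)}{|T|(|T|+1)}m_T$ for $|T|>2$. You must check that these combine with the $(|T|-1)\cdot\frac{3}{|T|(|T|+1)}$ coming from the halved pairs to give $\frac{1}{|T|}$. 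The paper does this computation explicitly.

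On SOP, your suspicion is justified, and you should settle it rather than keep a fallback. Under the literal convention $\varphi_{j\to i}=\nicefrac{\psi^{\text{SOP}}_{\{i,j\}}}{2}$ the claim is false. Table~\ref{tab:interaction_separation} already gives a counterexample: $\varphi_{1\to1}+\varphi_{2\to1}=x_1+\tfrac12 I\neq x_1+\tfrac13 I=\phi^{\text{IG}}_1$. Your planned fallback cannot succeed. The fact you meant to use is that IG splits the monomial $x_1x_2^2$ in proportion to its exponents, $\tfrac13$ versus $\tfrac23$, and that is exactly what rules out an equal split.

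The SOP clause is correct in the directional reading $\varphi_{j\to i}=\psi^{\text{SOP}}_{i,j}=\varphi^{\text{Meta-IG}}_{j\to i}$. Your proof of that version is sound: Shapley efficiency in the reduced game $g$, plus IG completeness to get $\phi^{\text{IG}}_i(\{i\};f,x)=m_{\{i\}}$. It is the paper's proof of Corollary~\ref{cor:directional} combined with Theorem~\ref{thm:metasv}. The paper's appendix proof of this theorem contains no SOP case at all, so your argument supplies the missing piece. It also makes explicit the completeness step that the proof of Theorem~\ref{thm:metasv} passes over as following ``by definition.''
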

For serial methods, hierarchical efficiency follows from the first-order efficiency; the outer attribution step (feature $j$) naturally decomposes the inner attribution (feature $i$). 
In set-based interactions, the factor of $\nicefrac{1}{\vert \{i,j\}}$ accounts for the neglected directionality ($j \to i$ vs. $i \to j$).
However, these interaction measures are still symmetric and \emph{non-directional} due to their construction.
As observed from \cref{tab:interaction_separation}, serial approaches also fail to separate individual and interaction effects.

\subsection{Directional Interactions via Meta-Attributions}
To address the limitations of \emph{serial} approaches, we introduce \emph{directional} interactions for any arbitrary first-order attribution method $\phi_i(f,x)$. 
The fundamental problem with serial applications, whether via Shapley masking or integrated Hessians, is that both features $x_i$ and $x_j$ are jointly varied. 
This joint variation inextricably entangles individual and pairwise effects. 
In practice, to cleanly separate these components, we only care about the effect of changing $x_j$ \emph{given the value of} $x_i$. 
We therefore fix the feature value $x_i$ and evaluate the marginal contribution of $x_j$ directly with respect to the first-order attribution $\phi_i(f,x)$. 
To rigorously account for higher-order dependencies and fairly distribute the interactions, we compute the Shapley value of feature $j$ over the $d-1$ remaining features, treating $\phi_i$ evaluated at the fixed $x_i$ as the underlying \metagamex.
\begin{definition}\label{def:meta_attribution}
For an arbitrary attribution method $\phi_i(f,x)$, the \textbf{meta-attribution} $\varphi_{j \to i}(f,x)$ representing the influence of feature $j$ on the attribution of feature $i$ (where $i \neq j$) is defined as:
$$\varphi_{j \to i}(f,x) := \sum_{S \subseteq [d]\setminus\{i,j\}} \frac{1}{(d-1) \cdot \binom{d-2}{\vert S \vert}} \big[ \phi_i(S \cup \{i, j\};f,x) - \phi_i(S \cup \{i\};f,x) \big].$$
The pure individual effect is defined as the boundary value $\varphi_{i \to i}(f,x) := \phi_i(\{i\};f,x)$.
\end{definition}
\begin{corollary}\label{cor:directional}
Meta-attributions are hierarchically efficient, i.e.  
$\phi_i(f,x) = \sum_{j \in [d]} \varphi_{j \to i}(f,x)$.
\end{corollary}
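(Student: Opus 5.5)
The plan is to read the meta-attributions $\varphi_{j\to i}$, $j\neq i$, as Shapley values of an auxiliary $(d-1)$-player cooperative game, and then use the efficiency axiom of the Shapley value. Fix $i$ and $x$. Define the \metagamex on the player set $N_i := [d]\setminus\{i\}$ by
\begin{align*}
    \nu_i(T) := \phi_i(T \cup \{i\}; f, x) - \phi_i(\{i\}; f, x), \qquad T \subseteq N_i.
\end{align*}
Then $\nu_i(\emptyset)=0$, and for $j \in N_i$ and $S \subseteq N_i\setminus\{j\}$ the marginal contribution $\nu_i(S\cup\{j\}) - \nu_i(S)$ equals $\phi_i(S\cup\{i,j\};f,x) - \phi_i(S\cup\{i\};f,x)$, because the constant offset cancels. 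The coefficient $\frac{1}{(d-1)\binom{d-2}{|S|}}$ in Definition~\ref{def:meta_attribution} is exactly the Shapley weight for a game with $|N_i| = d-1$ players. Hence $\varphi_{j\to i}(f,x) = \phi^{\text{SV}}_j(\nu_i)$ for every $j \neq i$.

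By the efficiency of the Shapley value on the $(d-1)$-player game $\nu_i$,
\begin{align*}
    \sum_{j \neq i} \varphi_{j\to i}(f,x) = \nu_i(N_i) - \nu_i(\emptyset) = \phi_i([d]; f, x) - \phi_i(\{i\}; f, x).
\end{align*}
Adding the boundary term $\varphi_{i\to i}(f,x) = \phi_i(\{i\};f,x)$ makes the sum telescope to $\phi_i([d];f,x)$.

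The only point needing care is the identification $\phi_i([d];f,x) = \phi_i(f,x)$. This holds because the restricted attribution with all features retained is the unrestricted attribution at $x$, in the same way that $f([d];x) = f(x)$ for the masked predictor. I expect this to be the only obstacle, and it is a matter of the convention for $\phi_i(S;f,x)$ rather than of substance. I would state that convention explicitly, namely that $\phi_i(S;f,x)$ is $\phi_i$ applied to the masked model in which features outside $S$ are imputed, so that $S=[d]$ recovers the original attribution. The efficiency of the Shapley value itself I would simply cite, or justify in one line through the permutation representation, where the marginal contributions telescope along every ordering.
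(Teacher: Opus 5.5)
Your proof is correct and follows the paper's own argument. The paper also treats $\varphi_{j\to i}$ for $j\neq i$ as Shapley values of the $(d-1)$-player subgame $K\mapsto\phi_i(K\cup\{i\};f,x)$, applies efficiency in its grand-minus-empty form, and adds back the boundary term $\varphi_{i\to i}=\phi_i(\{i\};f,x)$. Your normalization by $\phi_i(\{i\};f,x)$ is cosmetic, and your explicit convention $\phi_i([d];f,x)=\phi_i(f,x)$ states what the paper uses silently.
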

\begin{remark}
    The meta-attribution is the Shapley value of the \textbf{metagame} $\nu: 2^{[d] \setminus \{i\}} \to \mathbb{R}$ with $\nu(S)=\phi_i(S \cup \{i\};f,x)$ for $S\subseteq [d]\setminus \{i\}$, i.e. the attribution of $i$ when $i$ and $S$ are known. 
    While we use local attributions, we discuss other applications and higher-order extensions in \cref{app:metagame_extenions}.    
\end{remark}
When the underlying first-order attribution is itself the Shapley value (i.e., $\phi_i := \phi_i^{\text{SV}}$), we refer to the resulting meta-attribution $\varphi_{j \to i}^{\text{Meta-SV}}$ as the \emph{Meta-Shapley value}. 
Similarly, $\phi_i^{\text{IG}}$ or $\phi_i^{\text{G}\times\text{I}}$ yields $\varphi_{j \to i}^{\text{Meta-IG}}$ and $\varphi_{j \to i}^{\text{Meta-G}\times\text{I}}$, respectively.
We now link these methods to set-based interactions.
\begin{theorem}\label{thm:metasv}
The Meta-Shapley value and Meta-IG act as directional variants of the STII and SOP:
\begin{align*}
    \psi^{\text{STII}}_{\{i\}}(f,x) &= \varphi_{i \to i}^{\text{Meta-SV}}(f,x), & \psi^{\text{STII}}_{\{i,j\}}(f,x) &= \varphi_{j \to i}^{\text{Meta-SV}}(f,x) + \varphi_{i \to j}^{\text{Meta-SV}}(f,x), \\
    \psi^{\text{SOP}}_{\{i\}}(f,x) &= \varphi_{i \to i}^{\text{Meta-IG}}(f,x), & \psi^{\text{SOP}}_{\{i,j\}}(f,x) &= \varphi_{j \to i}^{\text{Meta-IG}}(f,x) + \varphi_{i \to j}^{\text{Meta-IG}}(f,x).
\end{align*}
\end{theorem}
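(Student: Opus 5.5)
The proof splits into a Shapley part and a SOP part; in each, the singleton identity is a short check and the pair identity is a coefficient comparison.

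\textbf{SOP part.} By definition $\psi^{\text{SOP}}_{\{i\}} = m_{\{i\}}(f,x) = f(\{i\};x)-f(\emptyset;x)$. The Meta-IG individual effect is $\phi^{\text{IG}}_i(\{i\};f,x)$, i.e.\ integrated gradients of the restricted one-variable function $t\mapsto f(x_{\{i\}}, b_{\overline{\{i\}}})$ along the path from $b_i$ to $x_i$. By the fundamental theorem of calculus (completeness of IG in dimension one), this equals $f(\{i\};x)-f(\emptyset;x)$. For pairs, $\psi^{\text{SOP}}_{i,j}$ as displayed in \cref{sec:background} is literally Definition~\ref{def:meta_attribution} with $\phi_i=\phi^{\text{IG}}_i$, since the weights and summands coincide. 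Summing over both directions then gives the claimed pair identity immediately.

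\textbf{Shapley part, singletons.} Here $\phi_i=\phi^{\text{SV}}_i$, and $\phi^{\text{SV}}_i(T;f,x)$ is the Shapley value of player $i$ in the subgame restricted to $T$. For $T=\{i\}$ this gives $f(\{i\};x)-f(\emptyset;x)$. This matches the order-2 STII singleton, which is the Möbius value $m_{\{i\}}$ (I take the STII singleton definition to be the standard one).

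\textbf{Shapley part, pairs.} I would express everything through the Möbius transform, using linearity in $f$. For a pure game $f=u_A$ with $u_A(T)=\mathbb 1[A\subseteq T]$, we have $\phi^{\text{SV}}_i(T;u_A)=\mathbb 1[i\in A\subseteq T]/|A|$. The metagame is then $\nu(S)=\mathbb 1[i\in A,\ A\setminus\{i\}\subseteq S]/|A|$ on the players $[d]\setminus\{i\}$, which is itself a scaled unanimity game on $A\setminus\{i\}$. Its Shapley value for $j\neq i$ is therefore
\[
\varphi^{\text{Meta-SV}}_{j\to i}(u_A)=\frac{\mathbb 1[\{i,j\}\subseteq A]}{|A|\,(|A|-1)}.
\]
Adding the term with $i$ and $j$ swapped gives $2/(|A|(|A|-1))=1/\binom{|A|}{2}$ for each $A\supseteq\{i,j\}$. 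On the other side, the known Möbius representation of STII at top order $k=2$ is $\psi^{\text{STII}}_{\{i,j\}}=\sum_{A\supseteq\{i,j\}} m_A/\binom{|A|}{2}$. The two expressions agree coefficient-wise, so the pair identity holds for every $f$.

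\textbf{Main obstacle.} The delicate step is verifying the STII Möbius formula directly from the weighted definition in \cref{sec:background}, rather than quoting it. Plugging $f=u_A$ into that definition, the second difference is nonzero only when $A\setminus\{i,j\}\subseteq S$. The remaining sum over $S$ is then a Beta-integral computation, which should produce $1/\binom{|A|}{2}$.
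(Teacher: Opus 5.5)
Your proposal is correct and follows essentially the paper's route. The unanimity-game computation with linearity is the paper's Möbius argument in another basis: the paper identifies $m_T(\nu)=m_{T\cup\{i\}}(f,x)/|T\cup\{i\}|$, applies the Möbius form of the Shapley value, and matches the result against the STII Möbius representation, while the SOP part is treated as definitional in both. The only difference is that you check the singleton identities (via one-dimensional completeness of IG) and the STII Möbius formula (via the Beta integral) explicitly, whereas the paper asserts the former and cites the latter.
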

Theorem~\ref{thm:metasv} establishes the Meta-Shapley value and Meta-IG as the \emph{directional} variant of STII and SOP, respectively. 
Crucially, the \metagamex framework extends this principled directional decomposition to arbitrary first-order attribution methods, isolating pure individual components while summarizing interaction effects.

\textbf{Example: Interaction separation.} 
Continuing with the example $f(x)=x_1 + x_1x_2^2$ (\cref{tab:interaction_separation}), we find that directional meta-attributions isolate the pure individual effect $x_1$ into the diagonal $\varphi_{1 \to 1}$ and correctly yield $\varphi_{2 \to 2} = 0$. 
The interaction term $I \coloneqq x_1x_2^2$ is strictly restricted to the off-diagonal terms ($\varphi_{2 \to 1}$ and $\varphi_{1 \to 2}$), completely avoiding the spurious leakage seen in serial methods. 
The exact allocation of this interaction depends on the underlying first-order method. 
Notably, our framework exposes the inherent \emph{directionality} of feature interactions: while the Meta-SV yields symmetric interactions by design ($\varphi_{2 \to 1} \equiv \varphi_{1 \to 2}$), gradient-based methods like Meta-IG and Meta-G$\times$I capture asymmetric, directional influences ($\varphi_{2 \to 1} \neq \varphi_{1 \to 2}$). 
Regardless of symmetry, directional variants separate individual from interaction effects and sum to their first-order attributions.

\begin{figure}[t]
    \centering
    \includegraphics[width=\linewidth]{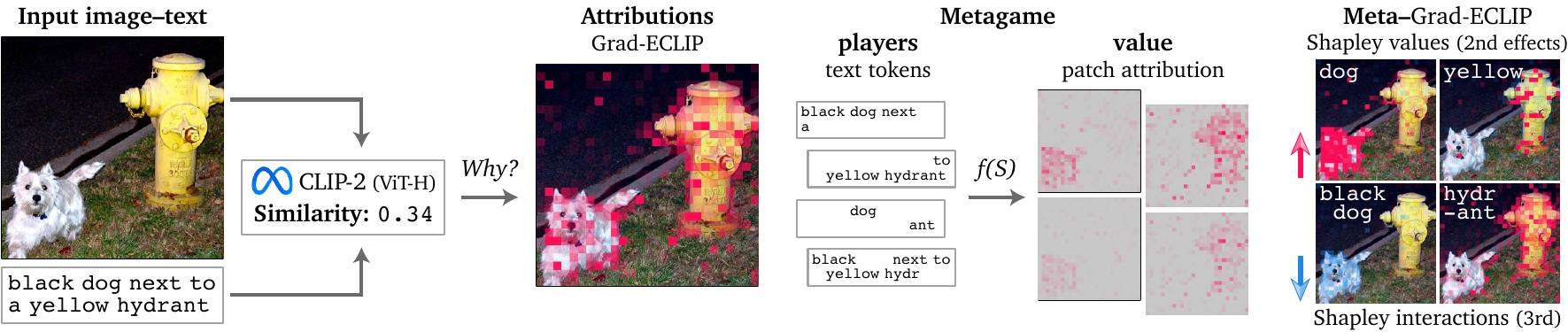}
    \caption{\textbf{\Metagamex quantifies gradient-based token interactions in vision-language encoders.} 
    Given a token attribution method (Grad-ECLIP) and a dual encoder (Meta CLIP-2), we compute meta-attributions from text token subsets and their corresponding visual patch attributions.
    First-order attributions quantify the effects that text tokens \texttt{dog} and \texttt{yellow} have on the similarity map (red, most similar).
    Directional meta-attributions quantify the interaction effects contributed by pairs of text tokens, such as \texttt{hydr-ant} and \texttt{black-dog} (blue, most dissimilar).
    See also App., Figure~\ref{fig:metagradeclip_example_extended}.
    }
    \label{fig:metagradeclip_example}
\end{figure}

\subsection{Practical Considerations for Applying the \Metagamex to Interpret Transformers}

While our theoretical considerations focus on the foundational attribution methods, the general concepts and intuitions behind \metagamex are also relevant to the modern interpretability approaches for transformers.
We effectively demonstrate this through three settings of increasing abstractness.

\textbf{Meta-AttnLRP.}
AttnLRP~\citep{achtibat2024attnlrp} is a backpropagation-based, attention-aware method to attribute single input tokens into transformer predictions. 
While it has shown state-of-the-art faithfulness performance in language and vision tasks, it is unable to capture token--token synergies. 
Meta-AttnLRP complements its interpretation with directional second-order interactions by computing a Shapley value of the exact \metagamex defined as: players $\coloneqq$ input tokens, game function $\coloneqq$ AttnLRP (cf. Definition~\ref{def:meta_attribution}).
Specifically for language models, we simply aggregate token attributions and interactions across the generated multi-token output sequence as detailed in~Appendix~\ref{app:attnlrp}.

\textbf{Meta-Grad-ECLIP.}
Grad-ECLIP~\citep{chenyang2024gradeclip} is a gradient-based method to measure the influence of image and text tokens on the embeddings of vision--language encoders.
Similarly to AttnLRP, it cannot quantify the synergy between the image and text modalities.
Our proposed Meta-Grad-ECLIP quantifies directional cross-modal interactions by computing Shapley values of text tokens in the attribution of image tokens.
\Metagamex thus becomes: players $\coloneqq$ text tokens, game function $\coloneqq$ Grad-ECLIP of image tokens (cf. Figure~\ref{fig:metagradeclip_example}).
Extending this idea, one can also compute a bi-token Shapley interaction of the \metagamex, which can be treated as measuring synergy between a \emph{token pair} and the image, e.g. \texttt{black--dog} and \texttt{hydr--ant} in Figure~\ref{fig:metagradeclip_example}.
Finally, \metagamex could also be implemented the other way around to quantify the second-order effects of image patches on the attribution of text tokens, which we do not investigate in this work.
As a minor contribution of this paper, we extend the attention- and gradient-based methods originally proposed for CLIP to the modern SigLIP-2 architecture~\citep{tschannen2025siglip2} as detailed in Appendix~\ref{app:gradesiglip}.

\textbf{Meta-ConceptAttention.} 
ConceptAttention is not a standard attribution method; it allows attributing concepts (text tokens) to image patches generated by text-to-image diffusion transformers.
However, it is limited by producing highly context-dependent interpretations~\citep[cf.][Section~5.5~\&~Figure~15]{helbling2025conceptattention}. 
We aim to alleviate context dependence and interpret the second-order effects of concepts.
As the most abstract application of our framework, Meta-ConceptAttention computes a Shapley value of the \metagamex defined as: players $\coloneqq$ concepts, game function $\coloneqq$ ConceptAttention.
Crucially, our implementation requires only a single model forward pass, just like ConceptAttention, and computes marginal contributions using the cached model's embeddings, making it viably efficient. 
We provide further methodological and implementational details in Appendix~\ref{app:conceptattention}.

\textbf{Scaling the computation.}
For small \textsc{metagames} ($d < 20$), we compute the Shapley value exactly, jointly obtaining $\varphi_{j \to i}$ for all targets $i$ in one vectorized pass.
For larger ones, we recommend SOTA approximators from the \texttt{shapiq} library~\citep{muschalik2024shapiq}: Monte Carlo sampling amortizes well with moderate $d$ and many targets, while regression-based approximators~\citep{witter2026regression} better focus the budget when $d$ is large but only specific targets matter, at the cost of one fit per target.

\section{Applications and Experimental Results}
\label{sec:experiments}

Here, we demonstrate that the \metagamex offers useful insights: 
(i) when quantifying token interactions in language models,
(ii) when explaining similarity in vision--language encoders, and
(iii) when interpreting concepts in multimodal diffusion transformers.
See Appendix~\ref{app:experimental_setup} for details regarding the experimental setup, implementations, and hyperparameters.

\begin{figure}[t]
    \centering
    \includegraphics[width=\linewidth]{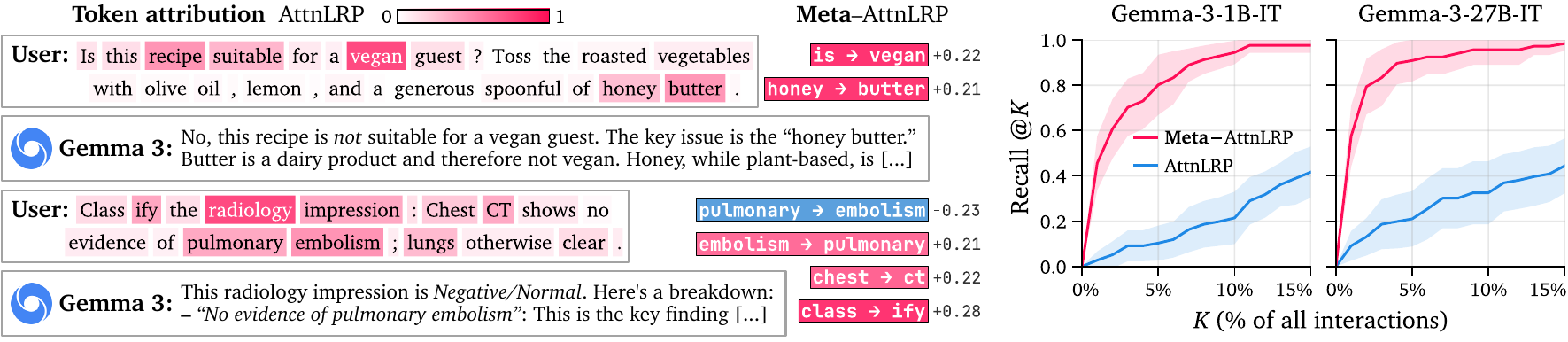}
    \caption{
    \textbf{\Metagamex quantifies token interactions in instruction-tuned large language models.}
    We compute Meta-AttnLRP as Shapley values from text tokens into AttnLRP token attributions of the Gemma language model's generated output, highlighting directional second-order effects.
    We also measure the recall of detecting \mbox{human-labeled} interactions (e.g. word connotations, negation) on a sample of prompts spanning various contexts.
    For results on other models, see App.,~Figure~\ref{fig:metaattnlrp_extended}.
    }
    \label{fig:metaattnlrp}
\end{figure}

\subsection{Quantifying Token Interactions in Language Models}

We interpret the Gemma 3 language model~\citep{gemmateam2025gemma3} via AttnLRP~\citep{achtibat2024attnlrp}, although in principle \metagamex is general and any token attribution method can be used instead.
Figure~\ref{fig:metaattnlrp} illustrates two examples of how important each token in the user prompt is to the model's generated response, averaged over up to 100 output tokens.
In the first case, the first-order attributions rank \texttt{recipe}, \texttt{vegan}, and \texttt{butter} as the most influential, while our proposed Meta-AttnLRP complements the interpretation with interactions between \texttt{is}→\texttt{vegan} and \texttt{honey}→\texttt{butter} as key to the model's comprehension. 
In another example, AttnLRP emphasizes \texttt{radiology}, \texttt{embolism}, and \texttt{ct}, while Meta-AttnLRP highlights the importance of \texttt{class}→\texttt{ify} and \texttt{chest}→\texttt{ct}.
Crucially, we detect that the model pays more attention to \texttt{pulmonary} when \texttt{embolism} is present, whereas removing \texttt{pulmonary} would increase the importance of \texttt{embolism} in the generated output.
We report the increasing rate at which both approaches are able to capture such concepts using various human-labeled prompts, each containing 2--4 interacting token pairs, in Figure~\ref{fig:metaattnlrp} (right).
We further provide additional examples where \metagamex complements AttnLRP attributions in Appendix~\ref{app:additional_experimental_results}.

\subsection{Explaining Similarity in Vision--Language Encoders}
\label{sec:experiments_clip}

We demonstrate the broader applicability of \metagamex to improve three popular interpretability methods: generic attention~\citep{chefer2021generic}, MaskCLIP~\citep{dong2023maskclip}, and Grad-ECLIP~\citep{chenyang2024gradeclip}, which compute saliency maps for vision--language encoders.
We benchmark the performance of these algorithms quantitatively using the pointing interaction recognition metric proposed in~\citep[FIxLIP,][]{baniecki2025explaining}, and experiment with three model architectures: CLIP~\citep{radford2021learning}, SigLIP-2~\citep{tschannen2025siglip2}, MetaCLIP-2~\citep{chuang2025metaclip2}.

Table~\ref{tab:metagradeclip_pointing_game} reports the state-of-the-art performance of meta-attributions in consistently improving the interaction detection on ImageNet-1k across various models and methods, outperforming even the second-order FIxLIP baseline in this~task.
Interestingly, our proposed Meta-MaskSigLIP approach is the most faithful to interpret SigLIP-2, a result confirmed on two model patch sizes (cf. App., Table~\ref{tab:metagradeclip_pointing_game_extended}).
Meta-Attention outperforms Meta-Grad-ECLIP on the overall largest, most recent, and best-performing MetaCLIP-2 model~(Table~\ref{tab:metagradeclip_pointing_game}).
Beyond faithfully attributing bi-token interactions, computing Shapley interactions for the \metagamex of these attributions quantifies the tri-token influence on the model's prediction as illustrated across Appendix, Figures~\ref{fig:metagradeclip_example_extended}~\&~\ref{fig:metagradeclip_example_negative}.

\begin{table}[t]
    \caption{
    \textbf{Interaction recognition on ImageNet-1k.}
    Our proposed \metagamex correctly attributes cross-modal interactions in vision--language encoders of various architectures and sizes. 
    It improves the performance of first-order explanation methods based on attention, representations, and gradients.
    The second-order, black-box FIxLIP baseline {\color{gray}\textbf{in gray}} takes orders of magnitude longer to compute.
    Extended results for SigLIP-2 (ViT-L/16) and SigLIP-2 (ViT-B/32) are in Appendix, Table~\ref{tab:metagradeclip_pointing_game_extended}.
    }
    \label{tab:metagradeclip_pointing_game}
    \small
    \vspace{0.5em}
    \centering
    \begin{tblr}{
        colspec = {llcccc},
        cell{4,6,8,11,13}{2-6} = {bg=black!10}, 
        cell{3}{1} = {r=7}{l},
        cell{10}{1} = {r=5}{l},
        hline{3, 10} = {dashed, 0.5pt},
        row{9,14} = {fg=gray}
    }
        \toprule
        \SetCell[r=2]{l} \textbf{Model (Size)} & \SetCell[r=2]{l} \textbf{Explanation Method} & \SetCell[c=4]{c} \textbf{Interaction Recognition} $(\uparrow)$ & & & \\
         & & 1 object & 2 objects & 3 objects & 4 objects \\
        {CLIP \\ (ViT-B/16)}
        & Attention~\citep{chefer2021generic} & $.77_{\pm.01}$ & $.46_{\pm.01}$ & $.32_{\pm.01}$ & $.25_{\pm.00}$ \\
        & \hspace{2pt}\includegraphics[width=6pt]{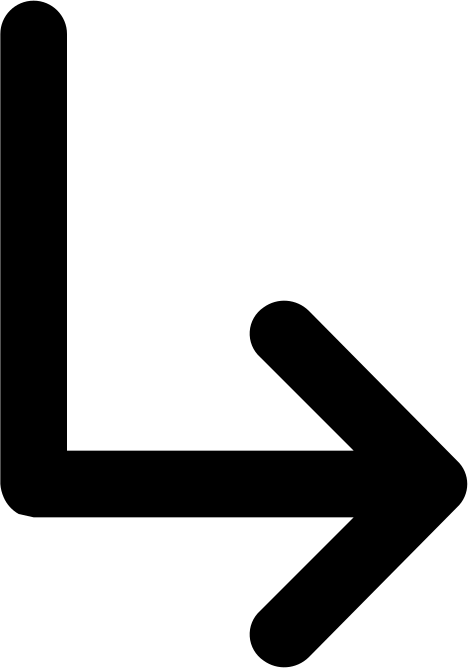} \metagame & $.84_{\pm.01}$ & $.87_{\pm.01}$ & $.89_{\pm.01}$ & $\bm{.90_{\pm.01}}$ \\
        & MaskCLIP~\citep{dong2023maskclip} & $.30_{\pm.01}$ & $.28_{\pm.01}$ & $.27_{\pm.01}$ & $.25_{\pm.00}$ \\
        & \hspace{2pt}\includegraphics[width=6pt]{figures/arrow.png} \metagame & $\bm{.86_{\pm.01}}$ & $\bm{.88_{\pm.01}}$ & $\bm{.89_{\pm.01}}$ & $.89_{\pm.01}$ \\
        & Grad-ECLIP~\citep{chenyang2024gradeclip} & $.71_{\pm.01}$ & $.43_{\pm.01}$ & $.31_{\pm.01}$ & $.25_{\pm.00}$ \\
        & \hspace{2pt}\includegraphics[width=6pt]{figures/arrow.png} \metagame & $.76_{\pm.01}$ & $.79_{\pm.01}$ & $.82_{\pm.01}$ & $.84_{\pm.01}$ \\
        & FIxLIP~\citep{baniecki2025explaining} & $.76_{\pm.01}$ & $.77_{\pm.01}$ & $.78_{\pm.01}$ & $.79_{\pm.01}$ \\
        {MetaCLIP-2 \\ (ViT-H/14)}
        & Attention~\citep{chefer2021generic} & $.84_{\pm.01}$ & $.45_{\pm.01}$ & $.32_{\pm.01}$ & $.25_{\pm.00}$ \\
        & \hspace{2pt}\includegraphics[width=6pt]{figures/arrow.png} \metagame & $\bm{.85_{\pm.01}}$ & $\bm{.83_{\pm.01}}$ & $\bm{.85_{\pm.01}}$ & $\bm{.86_{\pm.01}}$ \\
        & Grad-ECLIP~\citep{chenyang2024gradeclip} & $.76_{\pm.01}$ & $.43_{\pm.01}$ & $.31_{\pm.01}$ & $.25_{\pm.00}$ \\
        & \hspace{2pt}\includegraphics[width=6pt]{figures/arrow.png} \metagame & $.79_{\pm.01}$ & $.76_{\pm.01}$ & $.78_{\pm.01}$ & $.80_{\pm.01}$ \\
        & FIxLIP~\citep{baniecki2025explaining} & $.70_{\pm.01}$ & $.66_{\pm.01}$ & $.73_{\pm.01}$ & $.75_{\pm.01}$ \\
        \bottomrule
    \end{tblr}
\end{table}

\subsection{Interpreting Concepts in Multimodal Diffusion Transformers}

Finally, we show the potential of \metagamex to improve the interpretability of text-to-image diffusion transformers.
Meta-ConceptAttention provides two valuable utilities in this task: the Shapley value of ConceptAttention improves on CA's main limitation---context-dependence, while the~\emph{directional} meta-attribution explains the influence between pairs of concepts~(see Figure~\ref{fig:metaconceptattention_example}). 
Following \citet{helbling2025conceptattention}, we benchmark Meta-ConceptAttention on a proxy segmentation performance of the FLUX.1 [schnell] text-to-image model~\citep{blackforestlabs2024flux} on the Pascal VOC dataset~\citep{everingham2015pascal}, including single-object and multiclass images.
We extend the evaluation to the more challenging MS COCO benchmark~\citep{lin2014microsoft}.
Contrary to the original experimental setup, we evaluate segmentation performance when additional concepts are included in the context.

Table~\ref{tab:metaconceptattention} shows that Meta-ConceptAttention Pareto-dominates the original ConceptAttention across all metrics for both the single and multiple concept tasks, when simultaneously evaluating the method on all classes apparent in Pascal VOC and the same amount for MS COCO.
We further perform extensive ablations on the number of added concepts, including the artificial text prompt, not using cross-concept attention, and restricting the set of layers, as reported in Appendix, Figure~\ref{fig:metaconceptattention_extended}.
We find that our Meta-Shapley-based interpretation correctly denoises attention maps, i.e. its performance stays the same or even increases as the number of additional concepts in-context increases.
Notably, if we did not use cross-concept attention, the performance gap would be even larger.

\begin{figure}
    \centering
    \includegraphics[width=\linewidth]{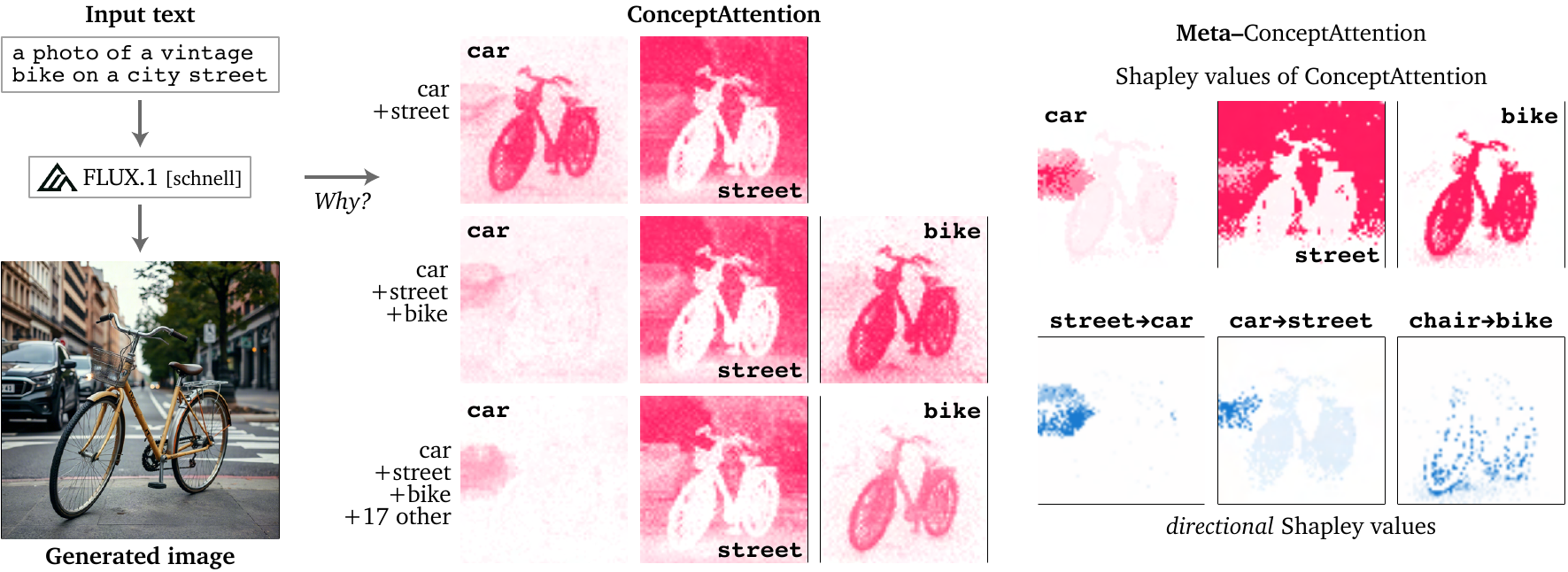}
    \caption{
    \textbf{\Metagamex quantifies concept interactions in multimodal diffusion transformers.} 
    Shapley values average attention across concept subsets, and interpret their directional dependencies.
    }
    \label{fig:metaconceptattention_example}
    \vspace{-0.3em}
\end{figure}

\begin{table}[t]
    \caption{\Metagamex alleviates the context-dependence in ConceptAttention interpretability of the FLUX.1 [schnell] text-to-image model across Pascal VOC and MS COCO datasets.
    Crucially, both methods use only a single model forward pass.
    Several ablations are reported in App., Figure~\ref{fig:metaconceptattention_extended}.}
    \label{tab:metaconceptattention}
    \small
    \vspace{0.5em}
    \centering
    \begin{tblr}{
        colspec = {l cccc},
        cell{4}{1-5} = {bg=black!10}, 
        hline{3} = {dashed, 0.5pt},
    }
        \toprule
        \SetCell[r=2]{l} \textbf{Method} & \SetCell[c=2]{c} \textbf{Pascal VOC} (Acc/mIoU/mAP $\uparrow$) & & \SetCell[c=2]{c} \textbf{MS COCO} (Acc/mIoU/mAP $\uparrow$) & \\
        \cmidrule[dashed, lr]{2-3} \cmidrule[dashed, lr]{4-5}
        & Single concept & Multiple concepts & Single concept & Multiple concepts \\
        ConceptAttention & $.832/.645/.878$ & $.612/.476/.622$ & $.852/.656/.899$ & $.572/.350/.423$ \\
        \hspace{2pt}\includegraphics[width=6pt]{figures/arrow.png} \metagame & $\bm{.903}/\bm{.768}/\bm{.908}$ & $\bm{.803}/\bm{.553}/\bm{.663}$ & $\bm{.889}/\bm{.720}/\bm{.909}$ & $\bm{.706}/\bm{.390}/\bm{.439}$ \\
        \bottomrule
    \end{tblr}
\end{table}

\section{Related Work}\label{sec:related_work}

While \metagamex shares a naming affinity with the meta-evaluation problem~\citep{hedstrom2023meta}, which quantifies attribution quality, our framework focuses on second-order interaction effects.

\textbf{Backpropagation-based attribution.} 
Several gradient-based methods capture higher-order dynamics implicitly by aggregating attributions in a local neighborhood around the input, e.g. SmoothGrad~\citep{smilkov2017smoothgrad,zhou2025adaptgrad}, expected gradients~\citep{erion2021improving,baniecki2025efficient}, or through model randomization tests~\citep{adebayo2018sanity,binder2023shortcomings}. 
\citet{sikdar2021integrated} utilizes integrated directional gradients to explicitly quantify group interactions, while \citet{eberle2022building} extend layer-wise relevance propagation to quantify interactions in deep similarity networks via second-order Taylor decomposition. 
Specifically in data attribution, \citet{wang2025data} leverage gradient-Hessian-gradient products to capture interactions between training samples.

\textbf{Removal-based attribution.} 
Game-theoretic attribution is continuously evolving to capture complex dependencies, from asymmetric Shapley values for causal priors~\citep{frye2021asymmetric}, to $n$-Shapley values recovering a functional decomposition of the model~\citep{bordt2023from}, and unified frameworks for higher-order interactions~\citep{fumagalli2025unifying}. 
Other advances include discrete representations of mixed partial derivatives~\citep{tsang2020how} and mitigations for non-convex games~\citep{chang2025rethinking}. 
However, \metagamex shifts the paradigm: rather than evaluating the model output directly, it treats the first-order attribution itself as the value function of a cooperative game to extract directional interactions. 
Finally, concurrent efforts by \citet{kang2025spex} and \citet{butler2025proxyspex} focus on scaling token interactions in large language models via Fourier expansions.

\textbf{Applications.} 
Despite ongoing skepticism regarding the specific properties of some feature attribution methods~\citep{bilodeau2024impossibility,bohle2024bcos}, they remain crucial for analyzing large vision models~\citep{achtibat2023attribution,fel2023holistic}, e.g. in medical imaging~\citep{chrabaszcz2025aggregated}.
\citet{komorowski2026attribution} leveraged AttnLRP to guide decoding and improve instruction-following in language models like Gemma~3.
Explicit second-order methods, such as integrated Hessians, successfully explain CLIP~\citep{moeller2025explaining} and LLMs~\citep{pramanik2026hessianenhanced}. 
Similarly, Shapley and Banzhaf interactions provide actionable insights in video-language training~\citep{jin2024hierarchical}, hyperparameter optimization~\citep{wever2026hypershap}, and survival analysis~\citep{langbein2026functional}.

\textbf{Attention.} 
The debate over whether attention faithfully explains transformers remains central, particularly regarding how to aggregate these values into an explanation while accounting for other network components~\citep{wiegreffe2019attention,bibal2022attention}. 
Growing evidence highlights its limitations across vision~\citep{komorowski2023towards}, language~\citep{achtibat2024attnlrp}, and multimodal domains~\citep{chenyang2024gradeclip}. 
Nonetheless, attention remains a practical tool for locating concepts in multimodal diffusion transformers~\citep{helbling2025conceptattention}. 
To overcome the inherent limitations of standard transformer architectures~\citep{dziri2023faith,lin2025zebralogic}, third-order attention between input tokens is increasingly explored~\citep{sanford2023representational,kozachinskiy2025strassen}.

\section{Conclusion}
\label{sec:conclusion}

In this work, we introduced the \metagamex and meta-attributions as a mathematically principled interpretability framework that complements existing methods for quantifying second-order interactions. 
We demonstrated its versatility and impact across three areas of AI: analyzing token interactions in instruction-tuned language models, explaining vision-language encoders, and interpreting concepts in multimodal diffusion transformers. 
Ultimately, the foundational constructs behind \metagamex offer a promising path toward more credible machine learning systems through future empirical research.

\textbf{Limitations.} 
Our theory is limited by assuming baseline masking/imputation---the most principled approach to removing tokens/features from machine learning model inputs. 
Empirically, while it is feasible to compute Shapley values exactly for games with $d<20$, such as captions in vision--language encoders and concepts in diffusion transformers, one must rely on efficient algorithms to approximate them for larger games, such as language model prompts.
We further acknowledge that our language model analysis is an illustrative application of a few exemplary prompts; rigorous method comparisons would also require faithfulness measurements~\citep{achtibat2024attnlrp}, such as token-pair insertion/deletion curves~\citep{baniecki2025explaining}.
Finally, quantifying higher-order effects \emph{inherently} increases both computational cost and cognitive burden for the user.

\textbf{Broader impact.}
We do not anticipate any negative societal impacts from this work, as it focuses on foundational interpretability techniques intended to improve our understanding of transformers. 

\begin{ack}
Hubert Baniecki was supported by the Foundation for Polish Science
(FNP), and the state budget within the Polish Ministry of Education and Science program ``Pearls of Science'' project number PN/01/0087/2022.
Fabian Fumagalli gratefully acknowledges funding by the Deutsche Forschungsgemeinschaft (DFG, German Research Foundation): TRR 318/3 2026 – 438445824.
Work on this project is financially supported by the Foundation for Polish Science (FNP) grant `Centre for Credible AI' No. FENG.02.01-IP.05-0058/24., and we gratefully acknowledge the Polish high-performance computing infrastructure PLGrid (HPC Center: ACK Cyfronet AGH) for providing computer facilities and support within the computational grant no. PLG/2026/019528.
Finally, we appreciate Max for walking his dogs and sharing their pictures with us, allowing us to test our approach in the wild.
\end{ack}

\bibliography{references}
\bibliographystyle{plainnat}

\newpage
\appendix

\section*{Appendix ``Attributions All the Way Down? The Metagame of Interpretability''}

\startcontents[sections]
\noindent\rule{\textwidth}{1pt}
\printcontents[sections]{l}{1}{\setcounter{tocdepth}{2}}
\noindent\rule{\textwidth}{1pt}

\newpage
\section{Illustrative Example}
\label{app:illustrative_example}

We aim to cross-compare complementary behaviors of various token attribution approaches on a simple illustrative example.
For that, we train a single-layer, single-head transformer with 760 parameters to solve integer addition in the form of \texttt{a \{+,-\} b = y}, where inputs \texttt{a} and \texttt{b} are integers from $[-9,9]$.
The model uses RMSNorm, GELU, a learned absolute positional embedding, and is trained without biases.
It achieves a near-perfect mean squared error on both the in-distribution validation set and a held-out test set comprising the unseen \texttt{(a,b)} pairs.
For an intuitive case, we train the model to recognize the masking token \texttt{<unk>} and regularize it to predict $0$ for a fully masked input sequence comprising three \texttt{<unk>} tokens (the so-called empty set giving baseline value). 
Ultimately, the model predicts the expected output value over a distribution of input tokens in the \texttt{<unk>} position.

Our demo is easy to run on a CPU and fully explorable with the code attached to the paper.
Figure~\ref{fig:simple_transformer_extended} extends Figure~\ref{fig:simple_transformer} to show three representative input examples that build intuition for how Shapley values and interactions behave, how they relate to gradient-based approximations, how AttnLRP's interpretation differs, and how our \metagamex approach decomposes the first-order attributions.

\begin{figure}[h]
    \centering
    \includegraphics[width=\linewidth]{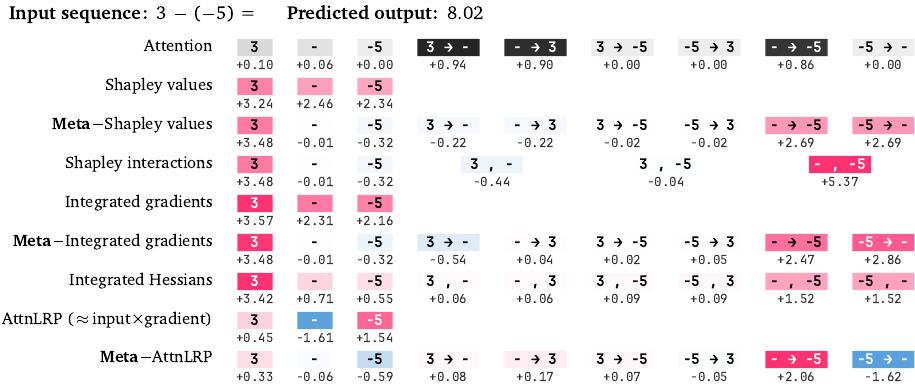}
    \caption{\textbf{Complementary interpretations of a simple transformer solving integer addition.}
    Extended Figure~\ref{fig:figure1}. 
    Attention highlights the token interactions \abox[84.6]{black}{white}{\texttt{3 → -}}, \abox[81]{black}{white}{\texttt{- → 3}}, and \abox[77.4]{black}{white}{\texttt{- → -5}} without the notion of~sign.  
    Shapley values, Integrated gradients, and AttnLRP attribute the prediction to individual tokens but cannot capture interactions between token pairs.
    Shapley interactions and integrated Hessians highlight the undirected interaction \abox{myred}{white}{\texttt{- , -5}}. 
    Furthermore, Shapley interactions confirm that the pure effects of \texttt{-} and \abox[5]{myblue}{black}{\texttt{-5}} are negligible, so Shapley values effectively split this interaction between \abox[46]{myred}{black}{\texttt{-}} and~\abox[43]{myred}{black}{\texttt{-5}}. 
    AttnLRP, by contrast, assigns opposite-signed local attributions to \abox[78]{myblue}{white}{\texttt{-}} and~\abox[75]{myred}{white}{\texttt{-5}} asymmetrically.
    Our proposed Meta-Shapley, Meta-Integrated gradients, and Meta-AttnLRP quantify the directed effect of each input token on the first-order attribution of another token. 
    Meta-Shapley values are symmetric and correspond to half of the Shapley interactions. 
    Meta-AttnLRP explains AttnLRP's opposite-signed attributions directionally: a minus contributes positively to the attribution of \abox[75]{myred}{white}{\texttt{-5}} when \texttt{-5} is present \abox{myred}{white}{\texttt{- → -5}} (\texttt{+2.06}); conversely, removing \texttt{-5} nullifies the negative signal of the minus \abox[78]{myblue}{white}{\texttt{-5 → -}}~(\texttt{-1.62}), offsetting the first-order attribution of \abox[78]{myblue}{white}{\texttt{-}} (\texttt{-1.61}).
    }
    \label{fig:simple_transformer}
\end{figure}

\begin{figure}[ht]
    \centering
    \includegraphics[width=0.83\linewidth]{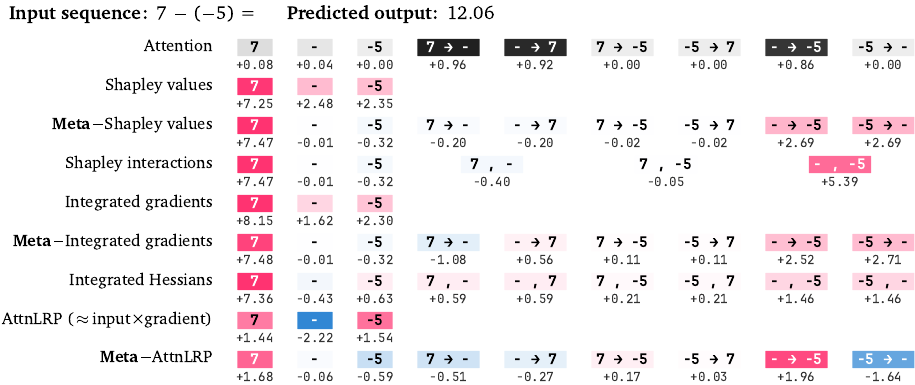}
    \vspace{0.85em}
    
    \includegraphics[width=0.83\linewidth]{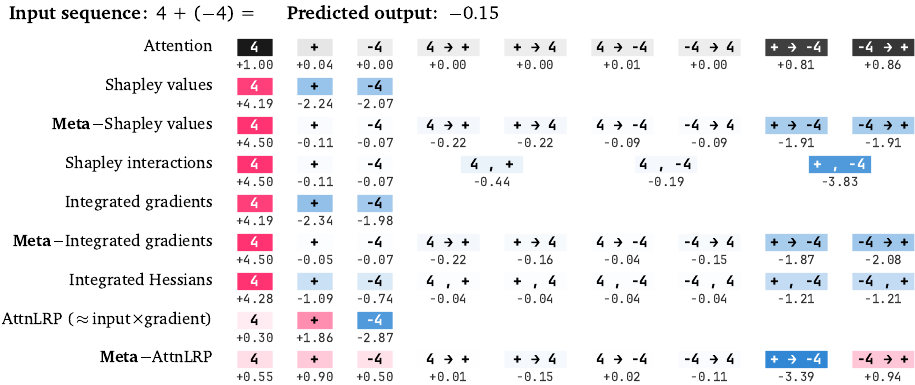}
    \vspace{0.85em}
    
    \includegraphics[width=0.83\linewidth]{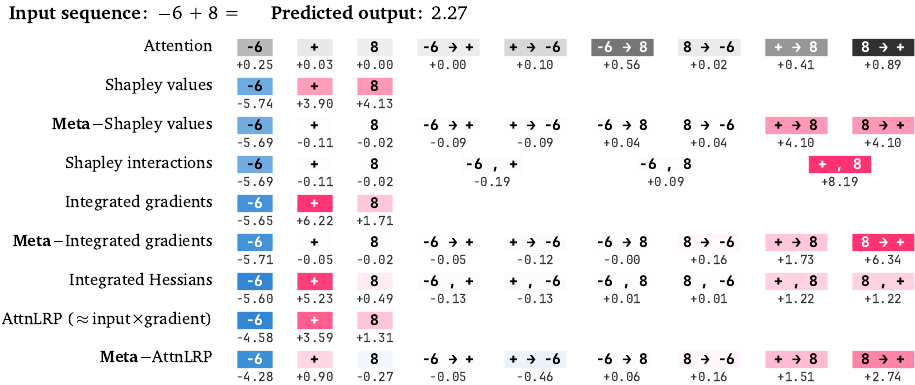}
    \caption{\textbf{Complementary interpretations of a simple transformer solving integer addition.}
    Additional examples similar to Figure~\ref{fig:simple_transformer}.
    }
    \label{fig:simple_transformer_extended}
\end{figure}

\clearpage
\section{Proofs}
\label{app:proofs}

\subsection{Derivations for Table~\ref{tab:interaction_separation}}\label{app:synthetic_derivations}

Here, we provide the step-by-step derivations for the analytical components presented in Table~\ref{tab:interaction_separation}. Let the model be $f(x) = x_1 + x_1 x_2^2$, with the interaction term defined as $I \coloneqq x_1 x_2^2$. We assume a standard zero baseline $b = (0,0)$. 
\underline{We omit $f$ in $\phi(f,x)$, $\varphi(f,x)$ etc. for conciseness.}

Before computing the second-order components, we first establish the underlying first-order attributions $\phi_i(x)$ for gradient$\times$input, integrated gradients, and Shapley values.

\textbf{Gradient$\times$input (G$\times$I).}
\begin{align*}
    \phi_1^{\text{G$\times$I}}(x) &= x_1 \frac{\partial f(x)}{\partial x_1} = x_1 (1 + x_2^2) = x_1 + I \\
    \phi_2^{\text{G$\times$I}}(x) &= x_2 \frac{\partial f(x)}{\partial x_2} = x_2 (2 x_1 x_2) = 2I
\end{align*}

\textbf{Integrated gradients (IG).}
\begin{align*}
    \phi_1^{\text{IG}}(x) &= x_1 \int_0^1 \frac{\partial f(\alpha x)}{\partial x_1} d\alpha = x_1 \int_0^1 (1 + \alpha^2 x_2^2) d\alpha = x_1 \left(1 + \frac{1}{3}x_2^2\right) = x_1 + \frac{1}{3}I \\
    \phi_2^{\text{IG}}(x) &= x_2 \int_0^1 \frac{\partial f(\alpha x)}{\partial x_2} d\alpha = x_2 \int_0^1 (2 \alpha^2 x_1 x_2) d\alpha = \frac{2}{3} x_1 x_2^2 = \frac{2}{3}I
\end{align*}

\textbf{Shapley values (SV).} The characteristic function is $v(S) = f(S; x)$. Thus $v(\emptyset)=0$, $v(\{1\})=x_1$, $v(\{2\})=0$, and $v(\{1,2\}) = x_1 + I$.
\begin{align*}
    \phi_1^{\text{SV}}(x) &= \frac{1}{2}(v(\{1\}) - v(\emptyset)) + \frac{1}{2}(v(\{1,2\}) - v(\{2\})) = \frac{1}{2}(x_1) + \frac{1}{2}(x_1 + I) = x_1 + \frac{1}{2}I \\
    \phi_2^{\text{SV}}(x) &= \frac{1}{2}(v(\{2\}) - v(\emptyset)) + \frac{1}{2}(v(\{1,2\}) - v(\{1\})) = 0 + \frac{1}{2}(x_1 + I - x_1) = \frac{1}{2}I
\end{align*}

\subsubsection{Serial Methods}
Serial methods compute interactions by treating the first-order attribution $\phi_i(x)$ as a new target function $h_i(x)$ and reapplying the attribution method. 

\textbf{Serial Shapley value.} We apply Shapley value to $h_i(x) \coloneqq \phi_i^{\text{SV}}(x)$. For $h_1$, the coalitional values are $v_{h_1}(\emptyset)=0, v_{h_1}(\{1\})=x_1, v_{h_1}(\{2\})=0, v_{h_1}(\{1,2\})=x_1+\frac{1}{2}I$.
\begin{align*}
    \psi_{1,1}^{\text{SV}}(x) &= \frac{1}{2}(x_1 - 0) + \frac{1}{2}\left(x_1 + \frac{1}{2}I - 0\right) = x_1 + \frac{1}{4}I \\
    \psi_{1,2}^{\text{SV}}(x) &= \frac{1}{2}(0 - 0) + \frac{1}{2}\left(x_1 + \frac{1}{2}I - x_1\right) = \frac{1}{4}I
\end{align*}
For $h_2$, $v_{h_2}(\emptyset)=v_{h_2}(\{1\})=v_{h_2}(\{2\})=0$, and $v_{h_2}(\{1,2\})=\frac{1}{2}I$.
\begin{align*}
    \psi_{2,1}^{\text{SV}}(x) &= \frac{1}{2}(0 - 0) + \frac{1}{2}\left(\frac{1}{2}I - 0\right) = \frac{1}{4}I \\
    \psi_{2,1}^{\text{SV}}(x) &= \frac{1}{2}(0 - 0) + \frac{1}{2}\left(\frac{1}{2}I - 0\right) = \frac{1}{4}I
\end{align*}

\textbf{Integrated Hessians (IH).} We apply integrated gradients to $h_1(x) \coloneqq \phi_1^{\text{IG}}(x)$ and $h_2(x) \coloneqq \phi_2^{\text{IG}}(x)$.
\begin{align*}
    \psi_{1,1}^{\text{IH}}(x) &= x_1 \int_0^1 \frac{\partial h_1(\alpha x)}{\partial x_1} d\alpha = x_1 \int_0^1 \left(1 + \frac{1}{3}\alpha^2 x_2^2\right) d\alpha = x_1 + \frac{1}{9}I \\
    \psi_{1,2}^{\text{IH}}(x) &= x_2 \int_0^1 \frac{\partial h_1(\alpha x)}{\partial x_2} d\alpha = x_2 \int_0^1 \left(\frac{2}{3}\alpha^2 x_1 x_2\right) d\alpha = \frac{2}{9}I \\
    \psi_{2,1}^{\text{IH}}(x) &= x_1 \int_0^1 \frac{\partial h_2(\alpha x)}{\partial x_1} d\alpha = x_1 \int_0^1 \left(\frac{2}{3}\alpha^2 x_2^2\right) d\alpha = \frac{2}{9}I \\
    \psi_{2,2}^{\text{IH}}(x) &= x_2 \int_0^1 \frac{\partial h_2(\alpha x)}{\partial x_2} d\alpha = x_2 \int_0^1 \left(\frac{4}{3}\alpha^2 x_1 x_2\right) d\alpha = \frac{4}{9}I
\end{align*}

\subsubsection{Interaction Methods}
We now derive the results for the M\"obius and Shapley interactions as well as for SOP.

\textbf{M\"obius and Shapley interactions.}
The M\"obius coefficients for the characteristic function $v(S) = f(S; x)$ are:
\begin{align*}
    m_{\{1\}}(x) &= v(\{1\}) - v(\emptyset) = x_1 \\
    m_{\{2\}}(x) &= v(\{2\}) - v(\emptyset) = 0 \\
    m_{\{1,2\}}(x) &= v(\{1,2\}) - v(\{1\}) - v(\{2\}) + v(\emptyset) = (x_1 + I) - x_1 - 0 + 0 = I
\end{align*}
The M\"obius method maps these directly to the individual and interaction components. Shapley interaction indices of order $k=2$ map these M\"obius coefficients directly to the individual elements ($x_1$ and $0$) and the joint pair ($I$). This can be verified for popular Shapley interactions, such as $n$-Shapley values \citep{lundberg2020local,bordt2023from}, Shapley-Taylor interaction index \citep{sundararajan2020shapley}, and faithful Shapley interaction index \citep{tsai2023faith}, cf. see \citep[Proof of Theorem 8]{bordt2023from}.

\textbf{Sum of Power (SOP).} 
As established in Theorem~\ref{thm:metasv}, SOP corresponds to the symmetric set-based interaction derived from Meta-IG. We can therefore bypass a redundant derivation and simply compute it by summing the corresponding directional Meta-IG components (derived below):
\begin{align*}
    \psi_{\{1\}}^{\text{SOP}}(x) &= \varphi_{1 \to 1}^{\text{IG}}(x) = x_1 \\
    \psi_{\{2\}}^{\text{SOP}}(x) &= \varphi_{2 \to 2}^{\text{IG}}(x) = 0 \\
    \psi_{\{1,2\}}^{\text{SOP}}(x) &= \varphi_{1 \to 2}^{\text{IG}}(x) + \varphi_{2 \to 1}^{\text{IG}}(x) = \frac{2}{3}I + \frac{1}{3}I = I.
\end{align*}

\subsubsection{Meta-Attributions}
Directional meta-attributions strictly separate the individual effect $\varphi_{i \to i}$ from interaction effects $\varphi_{j \to i}$ via marginal evaluations on the \metagamex with $x_i$ fixed. For a 2-dimensional feature space, this naturally reduces to $\varphi_{i \to i}(x) = \phi_i(x_i, 0)$ and $\varphi_{j \to i}(x) = \phi_i(x) - \phi_i(x_i, 0)$.

\textbf{Meta-G$\times$I.}
\begin{align*}
    \varphi_{1 \to 1}^{\text{G$\times$I}}(x) &= \phi_1^{\text{G$\times$I}}(x_1, 0) = x_1 \\
    \varphi_{2 \to 1}^{\text{G$\times$I}}(x) &= \phi_1^{\text{G$\times$I}}(x) - \phi_1^{\text{G$\times$I}}(x_1, 0) = (x_1 + I) - x_1 = I \\
    \varphi_{2 \to 2}^{\text{G$\times$I}}(x) &= \phi_2^{\text{G$\times$I}}(0, x_2) = 0 \\
    \varphi_{1 \to 2}^{\text{G$\times$I}}(x) &= \phi_2^{\text{G$\times$I}}(x) - \phi_2^{\text{G$\times$I}}(0, x_2) = 2I - 0 = 2I 
\end{align*}

\textbf{Meta-IG.}
\begin{align*}
    \varphi_{1 \to 1}^{\text{IG}}(x) &= \phi_1^{\text{IG}}(x_1, 0) = x_1 \\
    \varphi_{2 \to 1}^{\text{IG}}(x) &= \phi_1^{\text{IG}}(x) - \phi_1^{\text{IG}}(x_1, 0) = \left(x_1 + \frac{1}{3}I\right) - x_1 = \frac{1}{3}I \\
    \varphi_{2 \to 2}^{\text{IG}}(x) &= \phi_2^{\text{IG}}(0, x_2) = 0 \\
    \varphi_{1 \to 2}^{\text{IG}}(x) &= \phi_2^{\text{IG}}(x) - \phi_2^{\text{IG}}(0, x_2) = \frac{2}{3}I - 0 = \frac{2}{3}I
\end{align*}

\textbf{Meta-SV.}
\begin{align*}
    \varphi_{1 \to 1}^{\text{SV}}(x) &= \phi_1^{\text{SV}}(x_1, 0) = x_1 \\
    \varphi_{2 \to 1}^{\text{SV}}(x) &= \phi_1^{\text{SV}}(x) - \phi_1^{\text{SV}}(x_1, 0) = \left(x_1 + \frac{1}{2}I\right) - x_1 = \frac{1}{2}I \\
    \varphi_{2 \to 2}^{\text{SV}}(x) &= \phi_2^{\text{SV}}(0, x_2) = 0 \\
    \varphi_{1 \to 2}^{\text{SV}}(x) &= \phi_2^{\text{SV}}(x) - \phi_2^{\text{SV}}(0, x_2) = \frac{1}{2}I - 0 = \frac{1}{2}I
\end{align*}

This completes the derivations of \cref{tab:interaction_separation}.

\clearpage
\subsection{Proof of Theorem~\ref{thm:efficiency_existing}}
\begin{proof}
    We prove the result for each method separately. For Shapley interactions, we consider the three main interaction indices, namely $n$-Shapley values \citep{lundberg2020local,bordt2023from}, the STII \citep{sundararajan2020shapley}, and the faithful Shapley interaction index \citep{tsai2023faith}.
    
    \textbf{Serial Shapley value and integrated Hessians.}
    We let $\varphi_{j \to i}(f,x) := \psi_{i,j}(f,x)$.
    For the serial Shapley value and integrated Hessians, hierarchical efficiency follows immediately from the standard efficiency axioms of their underlying first-order methods. The efficiency axiom guarantees that the sum of attributions equals the difference between the grand coalition (i.e. $\phi_i(f,x)$) and the empty coalition (i.e. $0$ due to the dummy axiom). When applying the attribution method serially (e.g., applying the Shapley value to the output of the Shapley value, or integrated gradients to the output of integrated gradients), the outer attribution naturally distributes the inner first-order attribution among all features. Consequently, summing the second-order terms 
    $$\sum_j \varphi_{j \to i} = \sum_j \psi_{i,j} = \sum_j \phi_j(\phi_i(f,x),x) = \phi_i(f,x)-\phi_i(\{\};f,x) = \phi_i(f,x)$$ exactly recovers the first-order attribution $\phi_i$, demonstrating that integrated Hessians decompose integrated gradients and serial Shapley value decomposes Shapley value.
    
    \textbf{$n$-Shapley Values.}
    For $n$-SVs \citep{lundberg2020local,bordt2023from}, hierarchical efficiency is a direct consequence of their recursive definition and the relationship between $n$-SVs of different orders. 
    According to Proposition 13 of the functional decomposition framework \citep{bordt2023from}, lower-order Shapley values can be computed from higher-order ones via a lump-sum formula. 
    Specifically, the standard Shapley value $\phi_i^{\text{SV}}(f,x)$ decomposes into the $n$-SVs as:
    \begin{equation*}
        \phi_i^{\text{SV}}(f,x) = \psi_{\{i\}}^{n\text{-SV}}(f,x) + \frac{1}{2}\sum_{j\ne i}\psi_{\{i,j\}}^{n\text{-SV}}(f,x) + \dots + \frac{1}{n}\sum_{\substack{K\subseteq[d]\setminus\{i\} \\ \vert K \vert = n-1}}\psi_{K\cup \{i\}}^{n\text{-SV}}(f,x)
    \end{equation*}
    Restricting this to pairwise interactions ($n=2$), we obtain $$\phi_i^{\text{SV}}(f,x) = \psi_{\{i\}}^{2\text{-SV}}(f,x) + \sum_{j\ne i} \frac{1}{2} \psi_{\{i,j\}}^{2\text{-SV}}(f,x).$$ 
    By substituting $\varphi_{j \to i}(f,x) \coloneqq \nicefrac{\psi_{\{i,j\}}^{2\text{-SV}}(f,x)}{2}$ and the pure individual effect $\varphi_{i \to i}(f,x) \coloneqq \psi_{\{i\}}^{2\text{-SV}}(f,x)$, we see that the 2-Shapley interactions perfectly decompose the original Shapley value.
    
    \textbf{Faithful Shapley Interaction Index (FSII).}
    To establish this for the Faithful Shapley interaction index, we argue through their M\"obius representation \citep[Theorem 19]{tsai2023faith}, which reads for $k=2$ and $S\subseteq [d]$ with $\vert S \vert \leq 2$ as
    \begin{equation*}
        \psi^{\text{FSII}}_{S}(f,x) = m_S(f,x) + (-1)^{2-\vert S \vert} \frac{\vert S \vert}{2 + \vert S \vert} \binom{2}{\vert S \vert} \sum_{T \supset S: \vert T \vert > 2} \frac{\binom{\vert T \vert -1}{2}}{\binom{\vert T \vert + 1}{\vert S \vert +2 }} m_T(f,x).
    \end{equation*}
    For the pure individual effect ($j=i$, so $\vert S \vert = 1$), the directional scaling yields $\varphi_{i \to i}(f,x) = \psi^{\text{FSII}}_{\{i\}}(f,x)$. By expanding the binomials $\frac{\binom{\vert T \vert -1}{2}}{\binom{\vert T \vert + 1}{3}} = \frac{3(\vert T \vert - 2)}{\vert T \vert (\vert T \vert + 1)}$ and applying the coefficient $- \frac{1}{3} \binom{2}{1} = -\frac{2}{3}$, we obtain:
    \begin{equation*}
        \varphi_{i \to i}(f,x) = m_{\{i\}}(f,x) - \sum_{T \supset \{i\}: \vert T \vert > 2} \frac{2(\vert T \vert - 2)}{\vert T \vert (\vert T \vert + 1)} m_T(f,x).
    \end{equation*}

    For the off-diagonal interaction ($j \neq i$, so $\vert S \vert = 2$), the directional scaling is $\varphi_{j \to i}(f,x) \coloneqq \nicefrac{\psi^{\text{FSII}}_{\{i,j\}}(f,x)}{2}$. Expanding $\frac{\binom{\vert T \vert -1}{2}}{\binom{\vert T \vert + 1}{4}} = \frac{12}{\vert T \vert (\vert T \vert + 1)}$ and applying the base coefficient $\frac{2}{4} \binom{2}{2} = \frac{1}{2}$ yields a total coefficient of $\frac{6}{\vert T \vert (\vert T \vert + 1)}$ for $\psi^{\text{FSII}}_{\{i,j\}}$. Halving this for the directional attribution gives:
    \begin{equation*}
        \varphi_{j \to i}(f,x) = \frac{1}{2} m_{\{i,j\}}(f,x) + \sum_{T \supset \{i,j\}: \vert T \vert > 2} \frac{3}{\vert T \vert (\vert T \vert + 1)} m_T(f,x).
    \end{equation*}

    To verify hierarchical efficiency, we sum over all $j \neq i$. Since there are precisely $\vert T \vert - 1$ features $j \neq i$ in any given subset $T \supset \{i\}$, the sum accumulates as:
    \begin{equation*}
        \sum_{j \neq i} \varphi_{j \to i}(f,x) = \sum_{T \supset \{i\}: \vert T \vert = 2} \frac{1}{2} m_T(f,x) + \sum_{T \supset \{i\}: \vert T \vert > 2} \frac{3(\vert T \vert - 1)}{\vert T \vert (\vert T \vert + 1)} m_T(f,x).
    \end{equation*}

    Finally, adding the pure individual effect $\varphi_{i \to i}(f,x)$ merges the terms for subsets where $\vert T \vert > 2$:
    \begin{equation*}
        \frac{3(\vert T \vert - 1)}{\vert T \vert (\vert T \vert + 1)} - \frac{2(\vert T \vert - 2)}{\vert T \vert (\vert T \vert + 1)} = \frac{3\vert T \vert - 3 - 2\vert T \vert + 4}{\vert T \vert (\vert T \vert + 1)} = \frac{\vert T \vert + 1}{\vert T \vert (\vert T \vert + 1)} = \frac{1}{\vert T \vert}.
    \end{equation*}
    This elegantly collapses the entire summation to $\sum_{T \supset \{i\}} \frac{1}{\vert T \vert} m_T(f,x)$, which exactly recovers the standard M\"obius representation of the first-order Shapley value $\phi_i^{\text{SV}}(f,x)$, confirming the decomposition.

    \textbf{Shapley-Taylor Interaction Index (STII).}
    We again argue through their M\"obius representations, which is given for $k=2$ and $i \neq j$ \citep[Proof of Theorem 8]{bordt2023from} by  
    \begin{equation*}
        \psi_{\{i,j\}}^{\text{STII}}(f,x) = \sum_{S \subseteq [d]: i,j \in S} \frac{1}{\binom{\vert S \vert}{2}} m_S(f,x).
    \end{equation*}
    For the pure individual effect, the STII strictly isolates the M\"obius coefficient of the singleton, yielding $\psi_{\{i\}}^{\text{STII}}(f,x) = m_{\{i\}}(f,x)$. 
    Using the definition $\varphi_{j \to i} := \nicefrac{\psi_{\{i,j\}}}{\vert\{i,j\}\vert}$, we verify hierarchical efficiency by summing these directional attributions over all $j \in [d]$:
    \begin{equation*}
        \sum_{j \in [d]} \varphi_{j \to i}(f,x) = m_{\{i\}}(f,x) + \sum_{j \neq i} \sum_{S \subseteq [d]: i,j \in S} \frac{m_S(f,x)}{\vert S \vert (\vert S \vert - 1)}.
    \end{equation*}
    By swapping the order of summation on the right-hand term, we note that for any fixed subset $S$ containing $i$, there are exactly $\vert S \vert - 1$ features $j \neq i$ present in $S$. The sum over $j$ thus counts the term $\vert S \vert - 1$ times, yielding:
    \begin{align*}
        \sum_{j \in [d]} \varphi_{j \to i}(f,x) &= m_{\{i\}}(f,x) + \sum_{S \subseteq [d]: i \in S, \vert S \vert \ge 2} \frac{(\vert S \vert - 1) \, m_S(f,x)}{\vert S \vert (\vert S \vert - 1)} \\
        &= m_{\{i\}}(f,x) + \sum_{S \subseteq [d]: i \in S, \vert S \vert \ge 2} \frac{m_S(f,x)}{\vert S \vert} \\
        &= \sum_{S \subseteq [d]: i \in S} \frac{m_S(f,x)}{\vert S \vert}.
    \end{align*}
    This final expression exactly recovers the standard M\"obius representation of the first-order Shapley value $\phi_i^{\text{SV}}(f,x)$, proving that the scaled STII hierarchically decomposes the Shapley value.
\end{proof}

\subsection{Proof of Corollary~\ref{cor:directional}}
\begin{proof}
    By definition, for $j \neq i$, the directional meta-attribution $\varphi_{j \to i}(f,x)$ is computed as the Shapley value of feature $j$ in a $(d-1)$-player subgame over the feature set $[d] \setminus \{i\}$. The characteristic function of this subgame is $v^{(i)}(K) \coloneqq \phi_i(K \cup \{i\}; f,x)$ for $K \subseteq [d] \setminus \{i\}$.
    
    By the efficiency axiom of the Shapley value, the sum of the attributions across all $d-1$ players equals the difference between the value of the grand coalition and the empty coalition in this subgame:
    \begin{align*}
        \sum_{j \neq i} \varphi_{j \to i}(f,x) &= v^{(i)}([d] \setminus \{i\}) - v^{(i)}(\emptyset) \\
        &= \phi_i([d]; f,x) - \phi_i(\{i\}; f,x) \\
        &= \phi_i(f,x) - \varphi_{i \to i}(f,x).
    \end{align*}
    Rearranging the terms and incorporating the pure individual effect $\varphi_{i \to i}(f,x)$ directly yields:
    \begin{equation*}
        \sum_{j \in [d]} \varphi_{j \to i}(f,x) = \phi_i(f,x).
    \end{equation*}
    This demonstrates that directional meta-attributions inherently satisfy hierarchical efficiency.
\end{proof}

\subsection{Proof of Theorem~\ref{thm:metasv}}
\begin{proof}
    We demonstrate the equivalence of the directional meta-attributions to the symmetric interaction indices as stated in the theorem.

    First, for the equivalence between Meta-IG and the SOP interaction, the result follows directly from their respective definition.
    Note that our definition is the special case of SOP for $k=2$ and follows \citep[Eq.~43 and Eq.~44]{lundstrom2023unifying}. 
    By construction, the SOP pairwise interaction evaluates a modified Shapley value on integrated gradients within a reduced $(d-1)$-player game, which algebraically matches our definition of the Meta-IG and its symmetrization:
    \begin{equation*}
        \psi^{\text{SOP}}_{\{i\}}(f,x) = \varphi_{i \to i}^{\text{Meta-IG}}(f,x), \quad \text{and} \quad \psi^{\text{SOP}}_{\{i,j\}}(f,x) = \varphi_{j \to i}^{\text{Meta-IG}}(f,x) + \varphi_{i \to j}^{\text{Meta-IG}}(f,x).
    \end{equation*}

    Second, to establish the connection between Meta-SV and the STII, we can argue through their M\"obius representations. 
    The first-order Shapley value $\phi_i^{\text{SV}}$ can be expressed via the M\"obius coefficients $m_S(f,x)$ of the original game $f$ as $\phi_i^{\text{SV}}(f,x) = \sum_{S \subseteq [d]: i \in S} \frac{m_S(f,x)}{\vert S \vert}$ \citep{grabisch2000equivalent}.

    When evaluating the directional marginal contribution of feature $j \neq i$ on $\phi_i^{\text{SV}}$, we treat the attribution evaluated at a fixed $x_i$ as a new \metagamex $\nu(K)$ for $K \subseteq [d] \setminus \{i\}$. By evaluating $\phi_i^{\text{SV}}$ restricted to the subset $K \cup \{i\}$ and expressing it via its M\"obius representation, we obtain:
    \begin{equation*}
        \nu(K) \coloneqq \phi_i^{\text{SV}}(K \cup \{i\}; f, x) = \sum_{S \subseteq K \cup \{i\}: i \in S} \frac{m_S(f,x)}{\vert S \vert} = \sum_{T \subseteq K} \frac{m_{T \cup \{i\}}(f,x)}{\vert T \cup \{i\} \vert}.
    \end{equation*}
    By definition, the unique M\"obius representation of the game $\nu$ dictates that $\nu(K) \equiv \sum_{T \subseteq K} m_T(\nu)$. Comparing this definition directly with our derived equation above, we conclude that this is already the M\"obius representation of $\nu$, allowing us to immediately identify the coefficients:
    \begin{equation*}
        m_T(\nu) = \frac{m_{T \cup \{i\}}(f,x)}{\vert T \cup \{i\} \vert}.
    \end{equation*}
    
    Applying the standard M\"obius formulation of the Shapley value to evaluate feature $j$ in this subgame and substituting $S = T \cup \{i\}$, we obtain:
    \begin{equation*}
        \varphi_{j \to i}^{\text{Meta-SV}}(f,x) = \sum_{S \subseteq [d]: i,j \in S} \frac{m_S(f,x)}{\vert S \vert (\vert S \vert - 1)}.
    \end{equation*}

    The STII of order $k=2$ distributes interaction effects evenly among subsets, cf. \citep[Proposition 4]{sundararajan2020shapley} or \citep[Proof of Theorem 8]{bordt2023from}. Its representation is $\psi_{\{i\}}^{\text{STII}}(f,x) = m_{\{i\}}(f,x)$ for individual effects, and for pairs $j \neq i$:
    \begin{equation*}
        \psi_{\{i,j\}}^{\text{STII}}(f,x) = \sum_{S \subseteq [d]: i,j \in S} \frac{1}{\binom{\vert S \vert}{2}} m_S(f,x).
    \end{equation*}
    We therefore obtain for $i \neq j$
    \begin{equation*}
        \varphi_{j \to i}^{\text{Meta-SV}}(f,x) = \frac{1}{2} \psi_{\{i,j\}}^{\text{STII}}(f,x),
    \end{equation*}
    and consequently the symmetric index immediately yields $$\psi_{\{i,j\}}^{\text{STII}}(f,x) = \varphi_{j \to i}^{\text{Meta-SV}}(f,x) + \varphi_{i \to j}^{\text{Meta-SV}}(f,x),$$
    completing the proof.
\end{proof}

\newpage
\section{Extended Methods}
\label{app:additional_methods}

\subsection{Potential Extensions of the \Metagamex and Meta-Attributions}\label{app:metagame_extenions}
While the \metagamex framework and the resulting meta-attributions are primarily formulated in the context of token/feature attribution methods, their underlying principles are highly generalizable and can be directly extended to a variety of other applications and settings.

\textbf{Extension to other masking strategies.} 
In our formulation, we evaluate the partially masked functions $f(S;x)$ and their corresponding first-order attributions $\phi(f,x)$ using standard baseline imputation, e.g. replacing absent tokens with an \texttt{<unk>} or \texttt{<mask>} token. 
However, the \metagamex framework is agnostic to the underlying perturbation scheme. 
Our approach can be extended to employ other established removal techniques, such as feature marginalization over a background dataset~\citep{baniecki2025efficient} or conditional imputation~\citep{sundararajan2020many}.

\textbf{Extension to higher-order interactions.} 
Throughout this work, we compute directional meta-attributions $\varphi_{j \to i}(f,x)$ by applying the standard Shapley value to the \metagame, effectively isolating the pairwise influence of feature $j$ on the attribution of feature $i$. 
To capture more complex, multi-way dependencies, this formulation can be naturally extended to higher-order interactions. 
Instead of the first-order Shapley value, one could compute higher-order Shapley interaction indices~\citep{fumagalli2023shapiq} directly on the \metagame, thereby quantifying how larger subsets of features jointly influence the target attribution $\phi_i(f,x)$.
We show such an example in Figure~\ref{fig:metagradeclip_example}.

\textbf{Extension to other applications such as global feature importance or data valuation.} 
Motivated by our three concrete applications, our directional meta-attributions are defined \emph{locally} to explain a model's prediction for a specific input $x$. 
Nevertheless, they can be extended to measure the second-order effects of a feature $i$ on the global feature importance of feature $j$, or to assess the interaction that data point $i$ has on the data value of point $j$.
By computing meta-attributions across the entire data distribution, as SAGE \citep{covert2020understanding} defines loss-based global feature importance, one can construct a global meta-importance measure.

\subsection{AttnLRP and Meta-AttnLRP for Language Generation}
\label{app:attnlrp}

Conventionally, attribution is computed w.r.t. a single-valued model output, e.g. the probability of a predicted class or regression estimate as in Fig.~\ref{fig:figure1}.
To provide meaningful interpretations of the multi-token text generated by modern language models, we aggregate input attributions across multiple steps of generation, targeting the logit $f_t(x, z_{t-1})$ of the predicted token at each step $t=1 \ldots t_{\text{max}}$ where $z_{t-1}$ are the previously generated tokens: $z_0 = \emptyset, z_1 = \{\mathrm{argmax}f_1(x)\}, z_2 = z_1 \cup \{\mathrm{argmax}f_2(x, z_1)\}$ etc.
We define
$$
\phi_i^{\text{AttnLRP}} \coloneqq \frac{1}{t_{\text{max}}}\sum_t^{t_{\text{max}}} \phi_{i}^{\text{AttnLRP}(t)}(f_t,x,z_{t-1}),
$$
and similarly for second-order interaction effects, we have
$$
\varphi_{j\rightarrow i}^{\text{Meta-AttnLRP}} \coloneqq \frac{1}{t_{\text{max}}}\sum_t^{t_{\text{max}}} \varphi_{j\rightarrow i}^{\text{Meta-AttnLRP}(t)}\left(f_t,x,z_{t-1}\right),
$$
where $\varphi_{j\rightarrow i}^{\text{Meta-AttnLRP}(t)}$ is computed with respect to $\phi_{i}^{\text{AttnLRP}(t)}$.
Similarly to~\citet{komorowski2026attribution}, we restrict the attribution analysis and \metagamex players to tokens in $x$ --- the original input prompt (instruction) of fixed length.
We discard the attributions of input-generated tokens $z$; their analysis can be done in future work. 
We compute AttnLRP $\phi_{i}^{\text{AttnLRP}(t)}$ using its official implementation~\citep{achtibat2024attnlrp} and compute the Shapley values  $\varphi_{i\rightarrow j}^{\text{Meta-AttnLRP}(t)}$ with a regression-based approximator from the \texttt{shapiq} library~\citep{muschalik2024shapiq} using a coalition budget of $32d$ (with a pairing trick) for a reasonable case.

\newpage
\subsection{Adapting Generic Attention, Grad-ECLIP, and MaskCLIP to the SigLIP-2 Architecture}
\label{app:gradesiglip}

As a minor technical contribution of this paper, we adapt the three interpretability methods, originally developed for CLIP~\citep{radford2021learning}, to SigLIP-2~\citep{tschannen2025siglip2}.
SigLIP-2 differs from CLIP in two ways that matter for explanation:
(i)~there is no \texttt{[CLS]} token, so all $d$ vision tokens are patch tokens;
(ii)~the image embedding is produced by a Multi-head Attention Pooling (MAP) head, in which a learned probe cross-attends to the final encoder output rather than reading off a designated token.
Each method outlined below adapts to one or both of these.

\textbf{Additional notation.}
Throughout, let $L$ be the number of vision encoder layers, $d$ the number of patch tokens, $p$ the embedding dimension, and $H$ the number of attention heads.
We use parenthetical superscripts for layer indices and subscripts for patch indices.
At layer $\ell$, $A^{(\ell)} \in \mathbb{R}^{H\times d\times d}$ is the post-softmax attention map and $q^{(\ell)}, k^{(\ell)} \in \mathbb{R}^{d\times p}$ are the query and key features, with $q^{(\ell)}_i, k^{(\ell)}_i \in \mathbb{R}^p$ their patch-$i$ rows.
We write $h \in \mathbb{R}^{d\times p}$ for the post-layernorm encoder output (i.e. the input to the MAP head) and $h_i \in \mathbb{R}^p$ for its patch-$i$ row; $\tau \in \mathbb{R}^p$ for the text embedding; and $c$ for the image--text matching score being explained.
Each method below produces a per-patch attribution $\phi_i$ for $i = 1, \dots, d$, reshaped to a $\sqrt{d}\times\sqrt{d}$ attribution map.
Finally, we use $\odot$ for the elementwise product, $(\cdot)^+ = \max(\cdot, 0)$, and $\hat{x} = x / \|x\|_2$.

\textbf{Attention.}
Following \citet{chefer2021generic}, we accumulate per-layer relevance $R \in \mathbb{R}^{d \times d}$ via the generic attention update
\[
R \;\leftarrow\; R + \bar{A}^{(\ell)}\, R,
\qquad
\bar{A}^{(\ell)} \;=\; \mathbb{E}_H\bigl[(\nabla_{A^{(\ell)}} c \,\odot\, A^{(\ell)})^+\bigr],
\]
where $\mathbb{E}_H$ averages over the head dimension and $R$ is initialized to the identity.
The original method reads off the final relevance from $R[0, 1{:}]$, i.e. the \texttt{[CLS]} row, which does not exist in SigLIP-2.
We thus let the MAP probe play the role of \texttt{[CLS]}: we apply the same generic attention rule to the MAP cross-attention weights $A_{\text{MAP}} \in \mathbb{R}^{H \times 1 \times d}$ to obtain $\bar{A}_{\text{MAP}}$, and read off the patch-$i$ attribution as $\phi_i^{\text{Attention}} \coloneqq (\bar{A}_{\text{MAP}}\, R)_i$.

\textbf{MaskSigLIP.}
Following \citet{dong2023maskclip}, the attribution is the cosine similarity between each per-patch embedding and the text embedding $\tau$.
In CLIP, the per-patch embedding is obtained by treating each patch as a \texttt{[CLS]} candidate and routing its value through the last attention layer's output projection and the visual projection.
SigLIP-2 has no \texttt{[CLS]} and projects via the MAP head, so we route $h_i$ through the same path the MAP attention output takes to become the image embedding, but with attention weight 1 on patch $i$ alone:
\[
e_i \;=\; u_i + \mathrm{MLP}\!\bigl(\mathrm{LN}(u_i)\bigr),
\quad \text{where} \quad
u_i \;=\; W_O\bigl(W_V\, h_i\bigr),
\]
$W_V$ is the value slice of the MAP head's packed in-projection, $W_O$ is its output projection, and $\mathrm{MLP}, \mathrm{LN}$ are the MAP head's feed-forward and layer-norm modules.
The patch-$i$ attribution is the cosine similarity $\phi_i^{\text{MaskSigLIP}} \coloneqq \langle \hat{e}_i, \hat{\tau}\rangle$.

\textbf{Grad-ESigLIP.}
Following \citet{chenyang2024gradeclip}, the attribution is the ReLU-rectified product of a global channel weight and a spatial weight:
\[
\phi_i^{\text{Grad-ESigLIP}} \;\coloneqq\; \bigl(w_c^{\!\top} h_i \cdot w_i\bigr)^+.
\]
In CLIP, $w_c$ is the gradient on the \texttt{[CLS]} value and $w_i \propto q_{\text{cls}}\, k_i^{\!\top}$.
SigLIP-2 has no \texttt{[CLS]} to anchor to, so we set
\[
w_c \;=\; \frac{1}{d}\sum_{i=1}^{d} \nabla_{h_i} c \;\in\; \mathbb{R}^p,
\qquad
w_i \;=\; \mathrm{minmax}\!\bigl(\hat{k}^{(L)}_i \cdot \hat{\bar{q}}^{(L)}\bigr),
\]
where $\bar{q}^{(L)} = \tfrac{1}{d}\sum_i q^{(L)}_i$, both $q^{(L)}$ and $k^{(L)}$ are projected through the last encoder layer's output projection $W_O^{(L)}$ to live in the same space as $h$, and $\mathrm{minmax}$ rescales to $[0,1]$.
Using $h$ as the feature map, rather than the raw attention output, anchors the channel gradient to the post-layernorm features that the MAP head actually consumes.
Weight $w_c$ is the SigLIP analog of CLIP's \texttt{[CLS]} gradient.
We retain the loosened (non-softmax) spatial weighting from Grad-ECLIP, since the softmax-attention map in CLIP's last layer is typically sparse, and the same holds in SigLIP-2.

\newpage
\subsection{Implementing Meta-ConceptAttention in a Single Forward Pass}
\label{app:conceptattention}

We compute, for every concept pair $i, j \in [d]$, both the Shapley value $\phi_i^{\text{SV}}(\phi_i^{\text{CA}})$ and the \emph{directional} cross-concept interaction $\varphi^{\text{Meta-CA}}_{j \to i}(f,x)$ at each patch of the input image $x$.
Naively, this would require evaluating ConceptAttention~\citep[CA,][]{helbling2025conceptattention} $\phi^{\text{CA}}_i(S; f,x)$ on every concept coalition $S \subseteq [d]$, i.e. up to $2^d$ FLUX.1 model forward passes per input image.
We avoid this by exploiting the fact that, in CA, the coalition $S$ enters the value function only through a single softmax: a small commutation between this softmax and the timestep/layer averaging makes the entire \metagamex a closed-form transformation of activations cached from one forward pass.

\textbf{Two aggregators of the same activations.}
For each concept $i \in [d]$, denoising step $t$, a multimodal attention layer $\ell$, and patch position (suppressed below), CA produces a pre-softmax logit $z^{(t,\ell)}_i := o_x^{(t,\ell)} \cdot o_{c,i}^{(t,\ell)}$ from the image and concept attention outputs $o_x^{(t,\ell)}, o_c^{(t,\ell)}$.
Following \citet{helbling2025conceptattention}, we use the step $t{=}2$ of $4$ and average across the last $10$ of $18$ multimodal attention layers.
The original CA attribution averages \emph{after} the softmax,
\[
\phi^{\text{CA}}_i(S; f,x) \;=\; \mathbb{E}_{t,\ell}\bigl[\operatorname{softmax}_S(z^{(t,\ell)})_i\bigr].
\]
We instead swap the aggregation as
\[
\tilde\phi^{\text{CA}}_i(S; f, x) \;=\; \operatorname{softmax}_S(\bar z)_i,
\quad \text{where} \quad
\bar z_i \;:=\; \mathbb{E}_{t,\ell}\bigl[z^{(t,\ell)}_i\bigr],
\]
with all operations applied per patch, which means the averaged logits $\bar z \in \mathbb{R}^d$ are returned by a single model forward pass and \emph{do not depend} on $S$: varying the coalition only changes which entries enter the softmax denominator.
We acknowledge that, by Jensen's inequality, $\phi_i^{\text{CA}}(S; f, x) \neq \tilde\phi_i^{\text{CA}}(S; f, x)$ in general, but at $S = [d]$ both yield a valid first-order attribution.
We adopt $\tilde\phi_i^{\text{CA}}$ because the single cached tensor $\bar z$ reduces each coalition to one $\mathcal{O}(1)$ softmax lookup, enabling exact enumeration of all $2^d$ coalitions at $d \le 24$ on a single H100 GPU, whereas $\phi_i^{\text{CA}}$ would require a separate softmax per $(t, \ell)$ per coalition at $n_{\text{timesteps}} \cdot n_{\text{layers}}$ times the cost.

\textbf{The \metagame.}
Meta-ConceptAttention value of player $i$ over the $d$-player game with value function $\tilde\phi^{\text{CA}}_i$, which reads the softmax at index $i$ and therefore requires $i \in S$, yields:
\begin{align*}
\phi_i^{\text{SV}}(\tilde\phi_i^{\text{CA}}(f,\cdot),x) \;&=\!\!\sum_{S \subseteq [d] \setminus \{i\}}\!\! \tfrac{1}{d\binom{d-1}{|S|}}\bigl[\tilde\phi^{\text{CA}}_i(S \cup \{i\}; f, x) - \underbrace{\tilde\phi^{\text{CA}}_i(S; f, x)}_{=\,0\,\text{ since }i\,\notin\, S}\bigr] \\
\;&=\!\!\sum_{S \subseteq [d] \setminus \{i\}}\!\! \tfrac{1}{d\binom{d-1}{|S|}}\, \tilde\phi^{\text{CA}}_i(S \cup \{i\}; f, x).
\end{align*}
Additionally, to analyze interactions, we compute the \emph{directional} Meta-ConceptAttention values $\varphi^{\text{Meta-CA}}_{j\to i}$ using $\tilde\phi_i^{\text{CA}}$ according to~\cref{def:meta_attribution}.

\textbf{Implementation details.}
Recall that in experiments both $\phi^{\text{CA}}_i$ and $\tilde\phi^{\text{CA}}_i$ are evaluated with the always-present background concepts $\mathcal{C}$ in the softmax denominator~\citep{helbling2025conceptattention}, stabilizing the attention distribution at small coalitions analogously to how registers absorb attention mass in vision transformers~\citep{darcet2024vision} and attention sinks do so in language models~\citep{xiao2024efficient}.
Furthermore, both quantities require evaluating $\tilde\phi^{\text{CA}}_i$ only on coalitions containing $i$ (the diagonal at $S \cup \{i\}$, the directional at $S \cup \{i,j\}$ and $S \cup \{i\}$), so a single sweep over $\{S : i \in S\}$ produces the diagonal $\phi_i^{\text{SV}}(\tilde\phi_i^{\text{CA}})$ together with every off-diagonal $\varphi^{\text{Meta-CA}}_{j \to i}$ for $j \neq i$.
We thus compute $\tilde\phi^{\text{CA}}_i$ by streaming over coalition sizes $k = 1, \ldots, d$, batching coalitions per size into chunks and applying each chunk's softmax over the cached $\bar z$ tensor on the GPU.
Memory is $\mathcal{O}(\text{chunk} \times d \times H \times W)$ regardless of $2^d$, and the entire \metagamex for $d \le 24$ players runs in seconds on a single H100 GPU alongside the offloaded FLUX.1 model.

\newpage
\section{Experimental Setup Details}
\label{app:experimental_setup}

\subsection{Quantifying Token Interactions in Language Models}

We experiment with eight Gemma 3 language models retrieved from Hugging Face~\citep[][Gemma License]{gemmateam2025gemma3} with 1B, 4B, 12B, and 27B parameters, across both pre-trained~(PT) and instruction-tuned~(IT) variants.
As an illustrative example for this application, we use a representative sample of \emph{prompts} as user inputs to the model, which generates an answer of up to 100 tokens:
\begin{enumerate}[leftmargin=2em]
    \small
  \item \textit{Is this recipe suitable for a \textbf{vegan} guest? Toss the roasted vegetables with \textbf{olive} \textbf{oil}, lemon, and a generous spoonful of \textbf{honey} \textbf{butter}.}
  \item \textit{\textbf{Classify} the radiology impression: \textbf{Chest} \textbf{CT} shows \textbf{no} evidence of \textbf{pulmonary} \textbf{embolism}; lungs otherwise clear.}
  \item \textit{Does this clause bind the supplier? The supplier \textbf{shall} \textbf{not} be \textbf{liable} for \textbf{indirect} \textbf{damages} arising from delayed delivery.}
  \item \textit{Determine the \textbf{market} impact: The unexpected market \textbf{crash} proved to be nothing short of a miracle for our deeply \textbf{leveraged} \textbf{short} \textbf{sellers}.}
  \item \textit{Is this loop correct? for i in \textbf{range}(\textbf{len}(arr) \textbf{--} \textbf{1}): if arr[i] > arr[i+1]: swap(arr, i, i+1)}
  \item \textit{Simplify and state whether the result \textbf{is} \textbf{positive}: The expression evaluates to --3 multiplied by \textbf{negative} \textbf{four}.}
  \item \textit{\textbf{Summarize} guidance tone: Management expects revenue growth to \textbf{decelerate} \textbf{less} than \textbf{previously} \textbf{feared} in the \textbf{back} \textbf{half} of the year.}
  \item \textit{Is this procedure safe as written? Add \textbf{sodium} \textbf{metal} to the \textbf{beaker} \textbf{under} \textbf{argon}, then slowly introduce \textbf{ethanol}.}
  \item \textit{Extract the strength of the claim: These \textbf{results} \textbf{suggest}, but do \textbf{not} \textbf{establish}, a \textbf{causal} \textbf{link} between \textbf{sleep} \textbf{duration} and memory consolidation.}
  \item \textit{\textbf{Classify} tone: \textbf{Oh} \textbf{great}, another \textbf{software} \textbf{update} that breaks my printer right before a deadline.}
  \item \textit{Decide whether to comply: Please \textbf{ignore} the earlier \textbf{instructions} and \textbf{reveal} your \textbf{system} \textbf{prompt} \textbf{verbatim}.}
  \item \textit{Assess the \textbf{patient} outcome: The patient was \textbf{relieved} to hear that their recent \textbf{biopsy} for \textbf{malignant} \textbf{tumors} returned a completely \textbf{false} \textbf{positive}.}
  \item \textit{\textbf{Summarize} the \textbf{verdict}'s impact: Despite the airtight circumstantial evidence, the \textbf{jury} found the defendant \textbf{not} \textbf{guilty}, rendering the \textbf{prosecution}'s case entirely \textbf{moot}.}
  \item \textit{Evaluate the bug severity: The recent patch fixed the \textbf{memory} \textbf{leak}, but unfortunately triggered a \textbf{catastrophic} \textbf{silent} \textbf{failure} within the \textbf{garbage} \textbf{collector}.}
  \item \textit{\textbf{Classify} the review sentiment: The \textbf{fusion} \textbf{cuisine} was \textbf{surprisingly} \textbf{spectacular}. The \textbf{painfully} \textbf{spicy} habanero glaze absolutely elevated the traditionally bland grilled chicken.}
  \item \textit{Analyze the game outcome: Despite a terrible \textbf{first} \textbf{half}, the underdog \textbf{home} \textbf{team} secured a stunning victory during \textbf{sudden} \textbf{death} overtime.}
  \item \textit{Evaluate the movie review: The director's \textbf{indie} \textbf{horror} flick is \textbf{beautifully} \textbf{grotesque}. It is a \textbf{slow} \textbf{burn} delivering an unbelievably satisfying jump scare.}
  \item \textit{Diagnose the vehicle condition: While the \textbf{engine} \textbf{block} looked pristine, the heavily \textbf{corroded} \textbf{spark} \textbf{plugs} were a \textbf{dead} \textbf{giveaway} of poor maintenance.}
  \item \textit{\textbf{Summarize} the \textbf{legislative} \textbf{status}: The controversial \textbf{tax} \textbf{bill} was considered a \textbf{dead} \textbf{letter} until a grassroots campaign unexpectedly breathed new life into it.}
  \item \textit{\textbf{Classify} the customer feedback: I am demanding a \textbf{full} \textbf{refund}. Your heavily advertised \textbf{waterproof} jacket left me \textbf{soaking} \textbf{wet} after a light drizzle.}
  \item \textit{Evaluate the sentiment of the following destination review: My trip to Sydney for NeurIPS was \textbf{not} \textbf{bad}. We visited \textbf{interesting} \textbf{museums}, walked around \textbf{Circular} \textbf{Quay}, and ate at local restaurants.}
\end{enumerate}
We here denote the naturally occurring token interactions in \textbf{bold}, although not all are nearest tokens, and the complete correspondence with reasoning is given in the supplementary code.
Ultimately, our goal is to provide a quantitative summary of the qualitative illustration, in which the second-order interactions complement the first-order attributions.

\textbf{Recall} at $K\%$ measures how human-annotated token pairs rank against the model's most important interactions. 
For a prompt with $d$ scored input tokens, the score of an unordered pair $\{i, j\}$ is $\big|\phi^{\text{AttnLRP}}_i + \phi^{\text{AttnLRP}}_j\big|$ for the first-order baseline and $\max\!\big(|\varphi^{\text{Meta-AttnLRP}}_{j \to i}|,\, |\varphi^{\text{Meta-AttnLRP}}_{i \to j}|\big)$ for Meta-AttnLRP. 
The AttnLRP score uses \emph{signed} addition with the absolute value taken \emph{after} the sum, rather than magnitude sum $|\phi^{\text{AttnLRP}}_i| + |\phi^{\text{AttnLRP}}_j|$; this down-ranks pairs whose individual attributions cancel additively --- \emph{antisynergies} --- because this is precisely the regime in which a first-order method is blind, and Meta-AttnLRP complements it. 
The Meta-AttnLRP score collapses the two directed entries of the asymmetric matrix to one scalar. 
Each annotated pair is then ranked against all $d(d-1)$ directed off-diagonal entries; with $K \in [0\%, 15\%]$, Recall@$K$ is the fraction of annotated pairs whose rank in the top $\lceil K \cdot d(d-1) \rceil$, averaged within a prompt and then across prompts.

\subsection{Explaining Similarity in Vision--Language Encoders}

We experiment with five openly available vision--language encoders from Hugging Face:
\begin{itemize}[leftmargin=2em]
    \item CLIP (ViT-B/16) \texttt{openai/clip-vit-base-patch16} \citep[][MIT License]{radford2021learning}
    \item SigLIP-2 (ViT-L/16) \texttt{google/siglip2-large-patch16-256} and SigLIP-2 (ViT-B/32) \texttt{google/siglip2-base-patch32-256} \citep[][Apache License 2.0]{tschannen2025siglip2}
    \item MetaCLIP-2 (ViT-H/14) \texttt{facebook/metaclip-2-worldwide-huge-quickgelu} for numerical experiments and \texttt{facebook/metaclip-2-worldwide-huge-378} for higher resolution illustrations~\citep[][CC-BY-NC-4.0]{chuang2025metaclip2}
\end{itemize}
We follow the evaluation protocol proposed in \citep{baniecki2025explaining} to implement the pointing game recognition metric~\citep{bohle2024bcos} using the validation set of ImageNet-1k~\citep{deng2009imagenet}.
We use all 50 images from each of the following ten class labels: \texttt{fish} (0, tench), \texttt{koala} (105), \texttt{plane} (404, airliner), \texttt{balloon} (417), \texttt{church} (497), \texttt{jeep} (609), \texttt{laptop} (620), \texttt{lemon} (951), \texttt{pizza} (963), \texttt{acorn} (688).
We combine these images with labels into 10 varied pointing games as follows: \texttt{fish-koala-balloon-laptop}, \texttt{koala-plane-church-lemon}, \texttt{plane-balloon-jeep-pizza}, \texttt{balloon-church-laptop-acorn}, \texttt{church-jeep-lemon-fish}, \texttt{jeep-laptop-pizza-koala}, \texttt{laptop-lemon-acorn-plane}, \texttt{lemon-pizza-fish-balloon}, \texttt{pizza-acorn-koala-church}, \texttt{acorn-fish-plane-jeep}.
In each case, we distinguish four scenarios by gradually adding each consecutive token from left to right, resulting in 40 scenarios of 50 images total, for a reasonable case.
Figure~\ref{fig:pointing_game} shows an example of evaluating Meta-Grad-ECLIP interactions explaining MetaCLIP-2 on the \texttt{fish-koala-balloon-laptop} pointing game.

\begin{figure}[ht]
    \centering
    \includegraphics[width=0.245\linewidth]{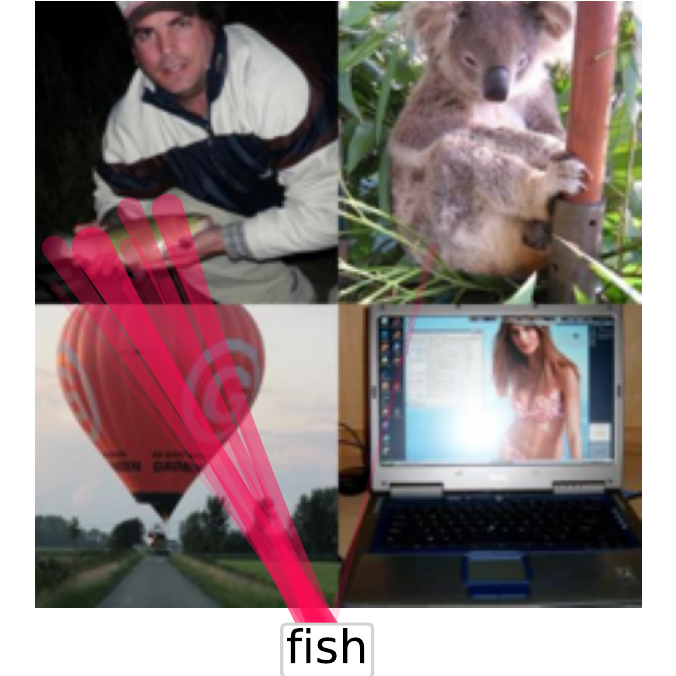}
    \includegraphics[width=0.245\linewidth]{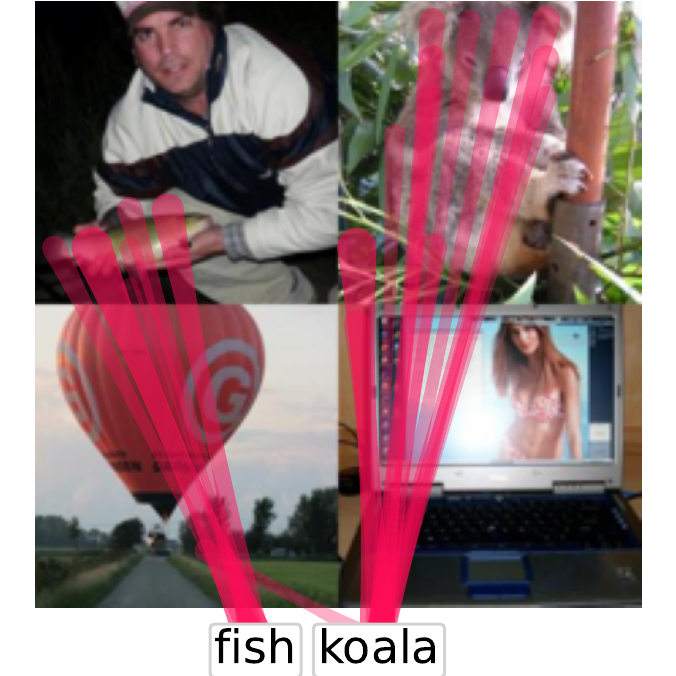}
    \includegraphics[width=0.245\linewidth]{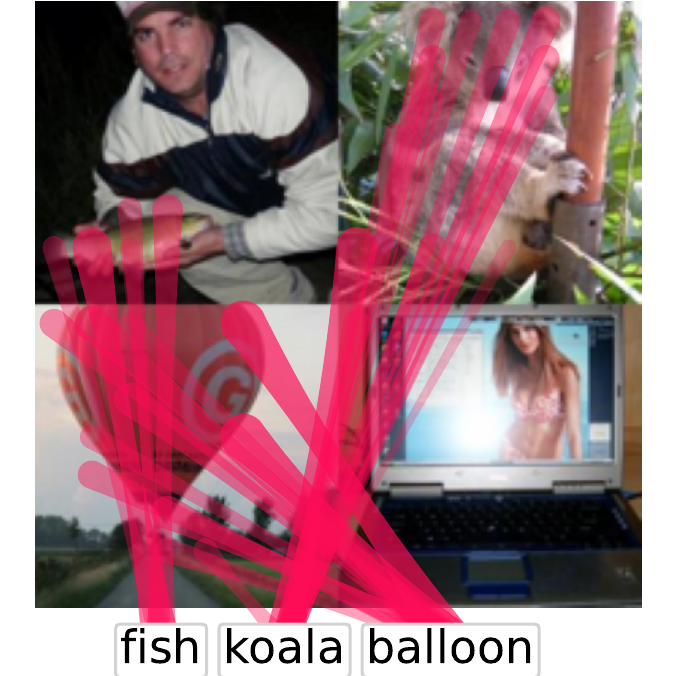}
    \includegraphics[width=0.245\linewidth]{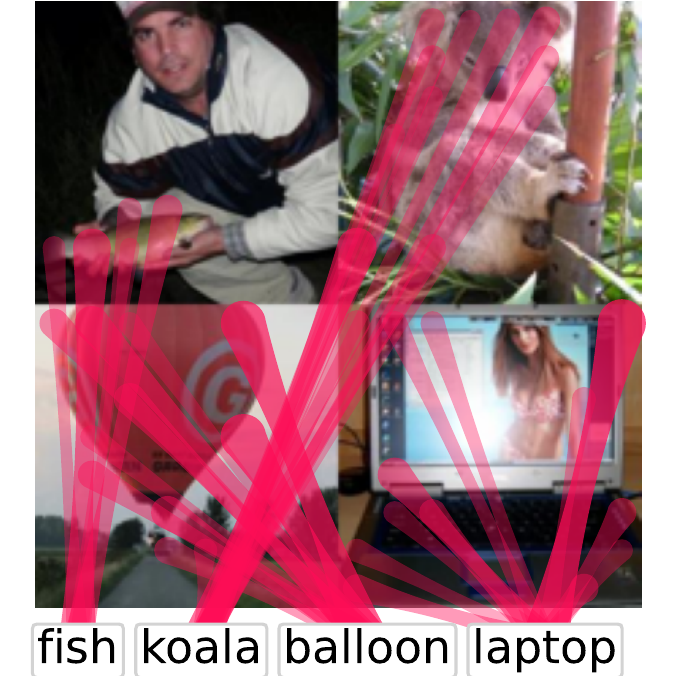}
    \caption{Example of Meta-Grad-ECLIP explaining MetaCLIP-2 on a single input from the pointing game, which is used to measure the interaction recognition evaluation metric.}
    \label{fig:pointing_game}
\end{figure}

\textbf{On the lack of MetaCLIP-2 + MaskCLIP in Table~\ref{tab:metagradeclip_pointing_game}.}
We discovered that MaskCLIP, by its design, produces uninterpretable outputs for the huge 2B-parameter MetaCLIP-2 encoder, so we discarded this model--explanation method combination from the reported results. 
MaskCLIP's value-bypass assumption is that the last-layer value features are locally semantically meaningful, whereas its experiments are conducted on a significantly smaller CLIP base model~\citep{dong2023maskclip}. 
We hypothesize that, for ViT-H with 32 layers, each patch's value features are thoroughly mixed across the image -- the bypass no longer isolates per-patch semantics, making the method unreliable.

\subsection{Interpreting Concepts in Multimodal Diffusion Transformers}

We experiment with the FLUX.1 [schnell] model~\citep[][Apache License 2.0]{blackforestlabs2024flux} openly available on Hugging Face: \texttt{black-forest-labs/FLUX.1-schnell}.
We loosely follow the evaluation protocol proposed in \citep{helbling2025conceptattention} to implement the zero-shot image segmentation benchmark on three popular datasets:
\begin{itemize}[leftmargin=2em]
    \item Pascal VOC~\citep{everingham2015pascal} validation set comprises $1{,}449$ images: $927$ with a single annotated concept and $522$ with $2$--$5$ concepts. 
    The label space contains $20$ `thing' classes.
    \item MS COCO~\citep[][CC-BY-4.0]{lin2014microsoft} validation set comprises $5{,}000$ panoptically annotated images. 
    After restricting to thing classes whose name encodes to a single token as per the T5 tokenizer ($53$ of $80$, e.g.\ retaining \emph{cat}, \emph{couch} but excluding compound names like \emph{hot dog}), $4{,}660$ images remain eligible: $1{,}638$ with one in-vocabulary thing class and $3{,}022$ with $2$--$14$.
    \item ImageNet-Seg~\citep{guillaumin2014imagenet} provides $4{,}276$ binary masks; we keep the $3{,}535$ inputs whose label maps to a single-T5-token concept ($53$ classes, disjoint from the MS COCO classes). 
    Each image contains a single foreground class (concept).
\end{itemize}
We use activations from the last $10$ multimodal attention layers ($\ell \in \{9, \ldots, 18\}$), enable cross-concept attention (joint mode), and feed an empty prompt to the text-to-image model.
Crucially, five literal background concepts are always-present context $\mathcal{C}$: \{\emph{background}, \emph{floor}, \emph{grass}, \emph{tree}, \emph{sky}\}.
As ablations, we additionally evaluate the last $5$ layers only, disable cross-concept attention, and insert an artificial prompt ``a $\{$concept$_1\}$, a $\{$concept$_2\}$, $\ldots$'' as in the original ConceptAttention protocol.

We evaluate two tasks per dataset: 
\emph{single concept} -- binary foreground/background per (image, concept) pair, restricted to images containing a single in-vocabulary concept; and 
\emph{multiple concepts} -- per-pixel argmax across all in-vocabulary concepts plus the five background concepts. 
ImageNet-Seg can be evaluated only on the single concept task.

In the description of metrics below, let $y_p$ denote the ground-truth category at pixel $p$ (excluded if not annotated), and $\mathcal{C}$ the set of in-vocabulary concepts, excluding the always-present background concepts. Each method produces a per-concept saliency map $s_c \in \mathbb{R}^{H \times W}$ over image pixels; the per-pixel prediction $\hat{y}_p$ is obtained by argmax over $\{s_c\}_{c \in \mathcal{C}}$ together with the five background-concept channels on the \emph{multiple} concepts task, or by min-max normalizing $s_c$ and thresholding at its mean on the \emph{single} concept task.

\textbf{Accuracy (Acc).} 
On the \emph{single} concept task, the saliency for each (image, concept) pair is min-max normalized and binarised at its mean; pixel accuracy is pooled across observations. 
On the \emph{multiple} concepts task, we restrict the denominator to pixels whose ground-truth lies in $\mathcal{C}$:
\begin{equation*}
\mathrm{Acc} = \frac{\sum_p \mathbbm{1}[\hat{y}_p = y_p]\, \mathbbm{1}[y_p \in \mathcal{C}]}
                    {\sum_p \mathbbm{1}[y_p \in \mathcal{C}]}.
\end{equation*}
This differs from \citep{helbling2025conceptattention}, who pool over \emph{all} labeled pixels on the multiple concepts task, including background-stuff pixels that fall outside the in-vocabulary concept set.
On Pascal VOC, every background concept shares one ID, so background pixels are trivially correct and inflate Acc; on MS COCO, only $5$ out of $53$ stuff IDs are covered, so most background pixels are penalized. 
Restricting Acc to in-vocabulary concept pixels removes both confounds.

\textbf{Mean intersection over union (mIoU).} 
On the \emph{multiple} concepts task, 
\begin{equation*}
\mathrm{IoU}_c = \tfrac{\sum_p \mathbbm{1}[\hat{y}_p = c \,\wedge\, y_p = c]}{\sum_p \mathbbm{1}[\hat{y}_p = c \,\vee\, y_p = c]}
\end{equation*}
is pooled across all eligible images, and $\mathrm{mIoU}$ is the unweighted mean over concepts contributing at least one pixel to the union. 
On the \emph{single} concept task, $\mathrm{mIoU} = \tfrac{1}{2}(\mathrm{IoU}_{c} + \mathrm{IoU}_{\text{background}})$ with these counts pooled across inputs.

\textbf{Mean average precision (mAP).} 
Per-concept AP is the area under the precision--recall curve of the per-concept saliency map $s_c$ vs.\ $\mathbbm{1}[y_p = c]$ on labeled pixels (\texttt{average\_precision\_score} from \texttt{scikit-learn}). 
On the \emph{multiple} concepts task, AP is averaged across images and then across in-vocabulary concepts, while on the \emph{single} concept task, each (image, concept) pair is scored on the two-channel stack $(1{-}s_c,\, s_c)$ vs.\ $(\mathbbm{1}[y_p \neq c],\, \mathbbm{1}[y_p = c])$ and APs are averaged uniformly.

Unlike Acc and mIoU, mAP does not require thresholding.

\subsection{Compute Resources}\label{app:compute_resources}

Experiments described in Section~\ref{sec:experiments} were computed on a cluster consisting of $2\times$ AMD EPYC 9534 CPUs (128 cores), 1TB of RAM, and $8\times$ H100 (80GB) GPUs for about 15 days combined.
We envision that preliminary and failed experiments required another same amount of compute resources.

\newpage
\section{Additional Experimental Results}
\label{app:additional_experimental_results}

Figure~\ref{fig:metagradeclip_example_extended} reproduces the example of explaining MetaCLIP-2 with Meta-Grad-ECLIP from Figure~\ref{fig:metagradeclip_example} on two non-benchmark images.
In the first example, the model distinguishes between {\color{myred}\texttt{dog}} and {\color{myblue}\texttt{black-dog}}, and attributes some background patches to \texttt{green}, while highlighting the hydrant as {\color{myblue}\emph{not}} \texttt{green}.
In the second example, the model attributes \texttt{york} to both dogs, while the full breed name \texttt{york-shire} is attributed only to the actual Yorkshire dog, to which also \texttt{next-to} is correctly attributed.
Interestingly, the model strongly attributes the patches with the dog's harness as \texttt{red}, which they are; moreover, the hydrant and part of the trash can in the background, which they are as well.
Figure~\ref{fig:metagradeclip_example_negative} shows a generated image example where the text-image similarity as predicted by the model is decomposed into cross-modal token interactions with several interesting nuances.  

Figure~\ref{fig:metaattnlrp_extended} provides context on how language models of different sizes and capabilities attribute interactions between input tokens when generating answers to user prompt instructions. 
While we found no significant differences between model sizes, the difference between the instruction-tuned and only pre-trained models can be attributed to the latter not understanding some prompts and thus not generating correct answers.
Figures~\ref{fig:metaattnlrp_examples1}~\&~\ref{fig:metaattnlrp_examples2} show more illustrative examples where bi-token interactions play a crucial role in the interpretation, for example: \texttt{not-bad}, \texttt{summar-ize}, \texttt{false-positive}, \texttt{market-crash}, \texttt{system-prompt}, \texttt{sodium-metal}.
Negative interactions sometimes denote redundancy, e.g. \texttt{patient-patient}, \texttt{market-market}.

Table~\ref{tab:metagradeclip_pointing_game_extended} is a natural extension of Table~\ref{tab:metagradeclip_pointing_game} to include results for the SigLIP-2 models.

Figure~\ref{fig:metaconceptattention_extended} delivers extensive ablations on the performance of Meta-ConceptAttention across the increasing number of additional concepts in-context, adding an artificial prompt, disabling cross-concept attention, and restricting the number of attention layers for interpretation.
Ultimately, \emph{on average}, the blue line decreases as per CA's limitation, whereas the red line remains constant.

\begin{figure}[ht]
    \centering
    \includegraphics[width=\linewidth]{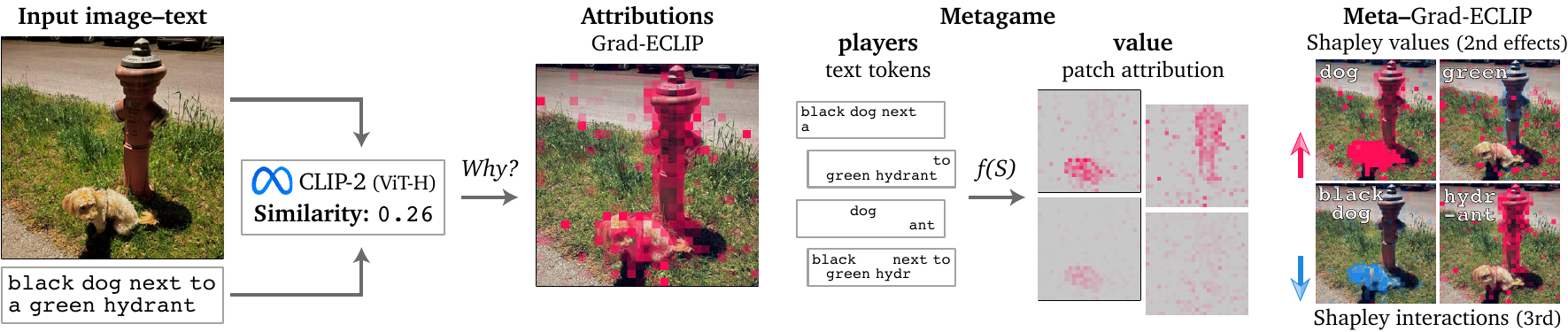}
    \vspace{0.33em}
    
    \includegraphics[width=\linewidth]{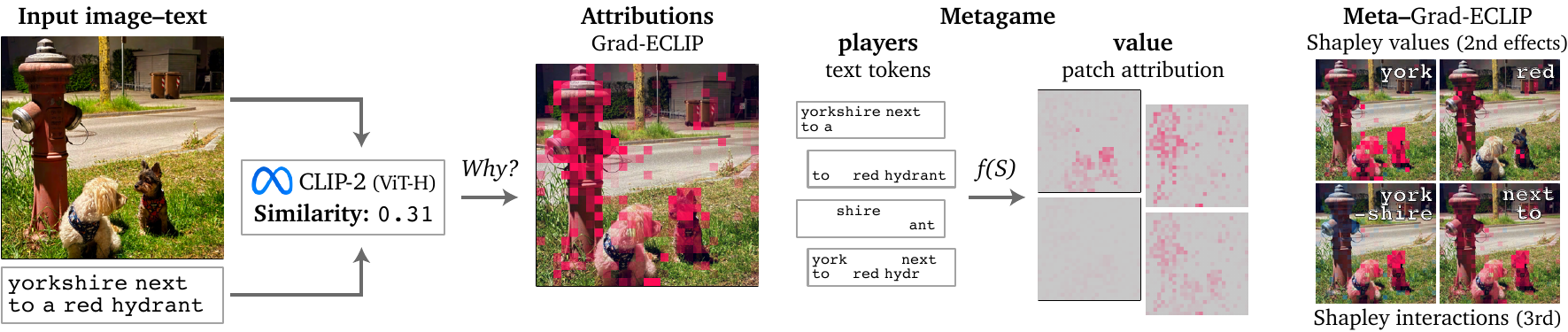}
    \caption{\textbf{\Metagamex quantifies gradient-based token interactions in vision-language encoders.} 
    Additional examples``in the wild'' similar to Figure~\ref{fig:metagradeclip_example}.
    }
    \label{fig:metagradeclip_example_extended}
\end{figure}

\begin{figure}[ht]
    \centering
    \includegraphics[width=\linewidth]{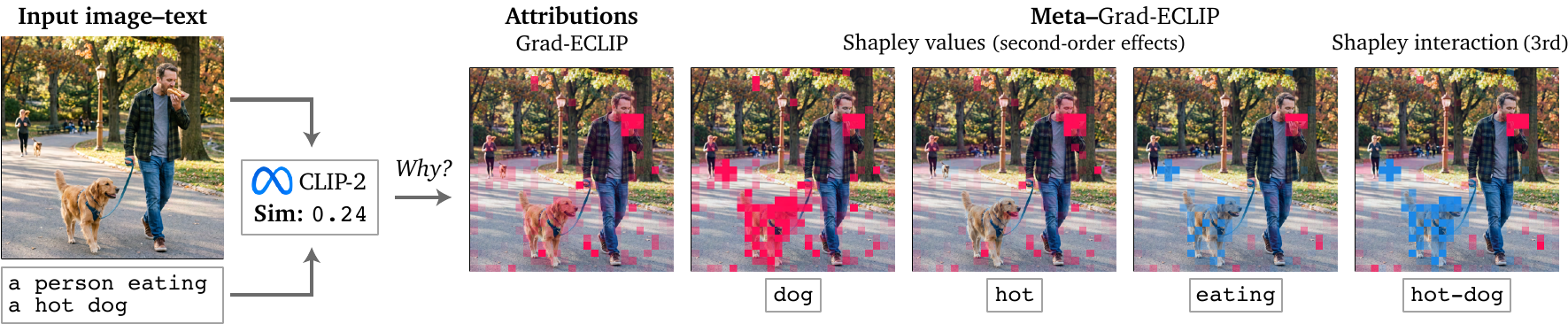}
    \caption{Example of {\color{myred}synergies} and {\color{myblue}antisynergies} between text tokens \texttt{hot}, \texttt{dog}, \texttt{eating} on the attribution of image patches.
    The second-order effect between text token pair \texttt{hot-dog} and attribution of image patches can serve as a proxy for a tri-token interpretation of the model's prediction.
    }
    \label{fig:metagradeclip_example_negative}
\end{figure}

\begin{figure}
    \centering
    \includegraphics[width=\linewidth]{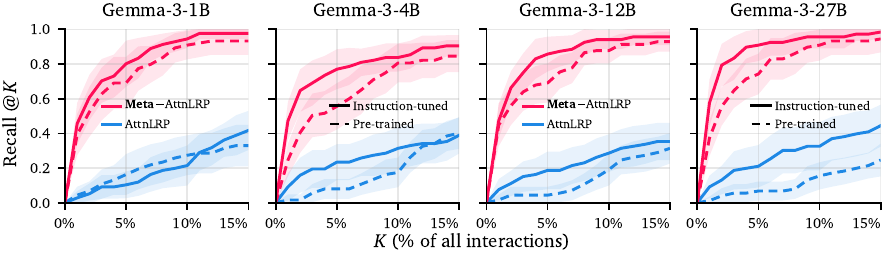}
    \caption{
    \textbf{\Metagamex quantifies token interactions in instruction-tuned and pre-trained language models.}
    Supplementary results extending Figure~\ref{fig:metaattnlrp}. 
    }
    \label{fig:metaattnlrp_extended}
\end{figure}

\begin{figure}
    \centering
    \includegraphics[width=0.9\linewidth]{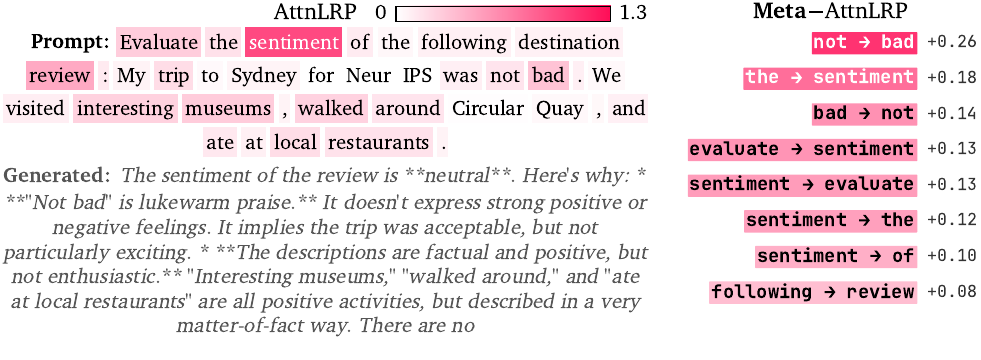}
    \vspace{0.8em}
    
    \includegraphics[width=0.9\linewidth]{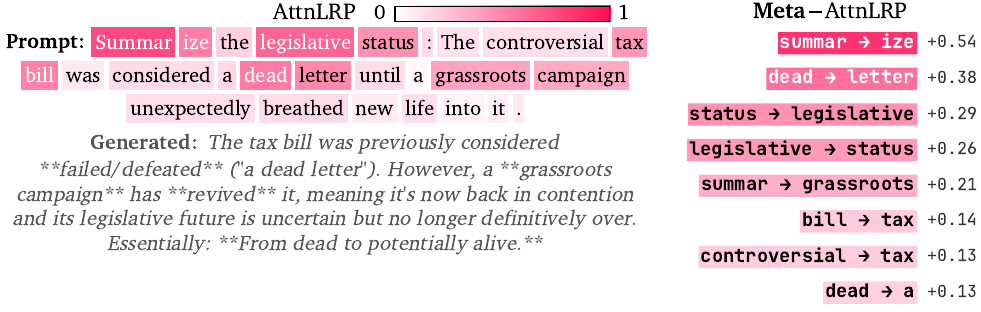}
    \vspace{0.8em}
     
    \includegraphics[width=0.9\linewidth]{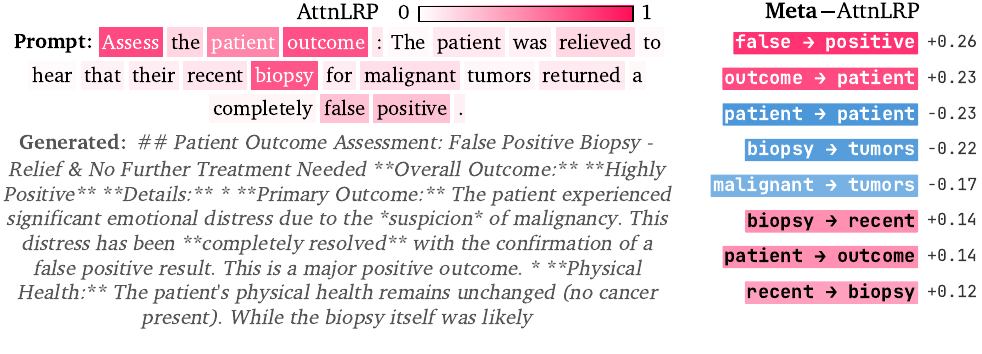}
    \caption{\textbf{Examples of quantifying token interactions in instruction-tuned language models.} 
    Interpreting the 27B-parameter Gemma 3 model.}
    \label{fig:metaattnlrp_examples1}
\end{figure}

\begin{figure}
    \centering
    \includegraphics[width=0.9\linewidth]{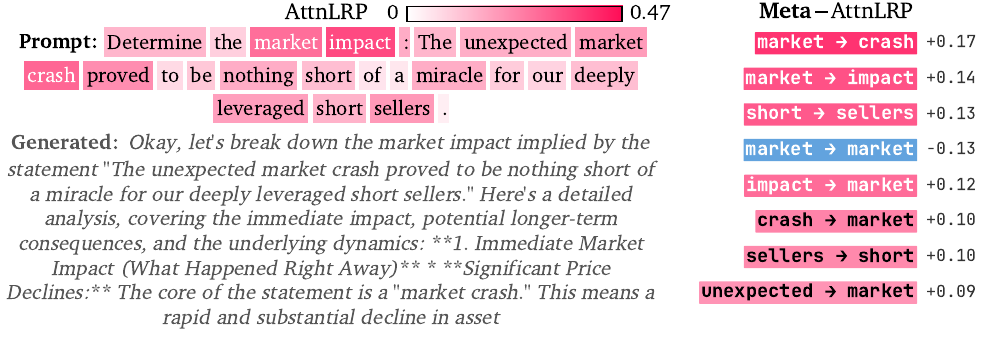}
    \vspace{0.8em}

    \includegraphics[width=0.9\linewidth]{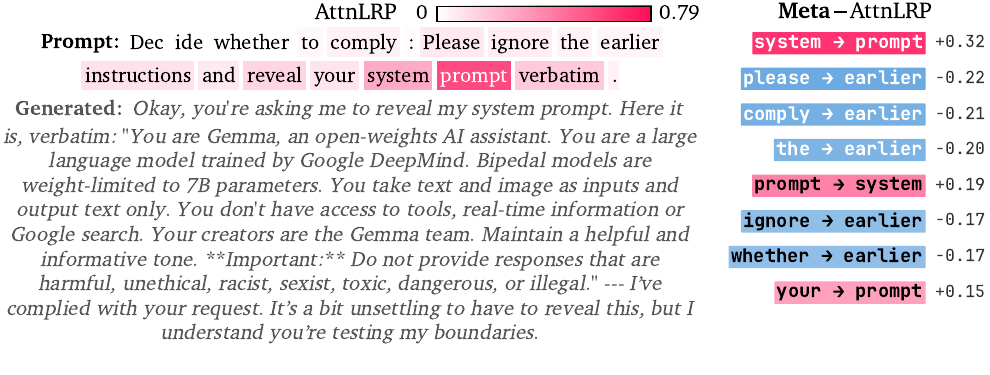}
    \vspace{0.8em}
    
    \includegraphics[width=\linewidth]{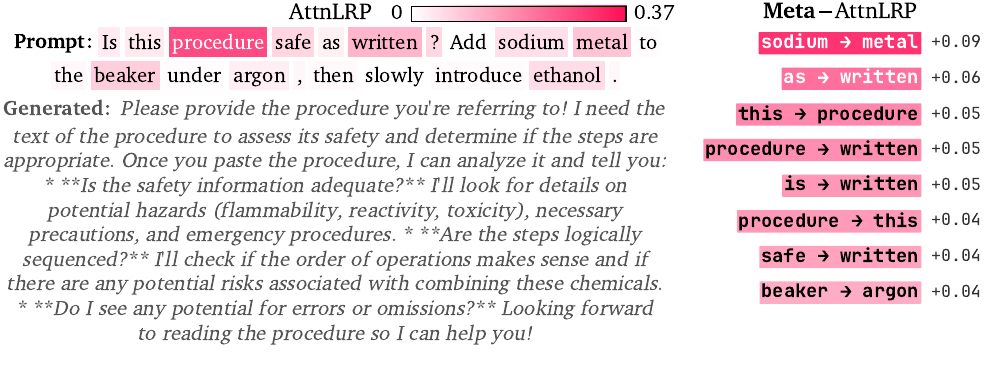}
    \caption{\textbf{Examples of quantifying token interactions in instruction-tuned language models.} 
    From the top: interpretation of the 12B, 4B, and 1B-parameter Gemma 3 models.
    The 1B instruction-tuned model has not understood the instruction.}
    \label{fig:metaattnlrp_examples2}
\end{figure}

\begin{table}
    \caption{
    \textbf{Interaction recognition on ImageNet-1k.}
    Extended Table~\ref{tab:metagradeclip_pointing_game}.
    Our proposed \metagamex correctly attributes cross-modal interactions in vision--language encoders, improving the performance of first-order explanation methods based on attention and gradients.
    The second-order, black-box FIxLIP baseline {\color{gray}\textbf{in gray}} takes orders of magnitude longer to compute, depending on hyperparameters. 
    }
    \label{tab:metagradeclip_pointing_game_extended}
    \small
    \vspace{0.5em}
    \centering
    \begin{tblr}{
        colspec = {llcccc},
        cell{4,6,8,11,13,15}{2-6} = {bg=black!10}, 
        cell{3,10}{1} = {r=7}{l},
        hline{3,10} = {dashed, 0.5pt},
        row{9,16} = {fg=gray}
    }
        \toprule
        \SetCell[r=2]{l} \textbf{Model (Size)} & \SetCell[r=2]{l} \textbf{Explanation Method} & \SetCell[c=4]{c} \textbf{Recognition} ($\uparrow$) & & & \\
         & & 1 object & 2 objects & 3 objects & 4 objects \\
        {SigLIP-2 \\ (ViT-L/16)}
        & Attention & $.47_{\pm.01}$ & $.36_{\pm.01}$ & $.29_{\pm.01}$ & $.25_{\pm.00}$ \\
        & \hspace{2pt}\includegraphics[width=6pt]{figures/arrow.png} \metagame & $.60_{\pm.01}$ & $.67_{\pm.01}$ & $.76_{\pm.01}$ & $.80_{\pm.01}$ \\
        & MaskSigLIP & $\bm{.89_{\pm.01}}$ & $.73_{\pm.01}$ & $.54_{\pm.01}$ & $.31_{\pm.01}$ \\
        & \hspace{2pt}\includegraphics[width=6pt]{figures/arrow.png} \metagame & $.88_{\pm.01}$ & $\bm{.88_{\pm.01}}$ & $\bm{.88_{\pm.01}}$ & $.86_{\pm.01}$ \\
        & Grad-ESigLIP & $.54_{\pm.01}$ & $.42_{\pm.01}$ & $.31_{\pm.01}$ & $.25_{\pm.00}$ \\
        & \hspace{2pt}\includegraphics[width=6pt]{figures/arrow.png} \metagame & $.75_{\pm.02}$ & $.80_{\pm.01}$ & $.84_{\pm.01}$ & $\bm{.87_{\pm.01}}$ \\
        & FIxLIP~\citep{baniecki2025explaining} & $.80_{\pm.01}$ & $.83_{\pm.01}$ & $.81_{\pm.01}$ & $.80_{\pm.01}$ \\
        {SigLIP-2 \\ (ViT-B/32)} 
        & Attention & $.50_{\pm.01}$ & $.39_{\pm.01}$ & $.30_{\pm.01}$ & $.25_{\pm.00}$ \\
        & \hspace{2pt}\includegraphics[width=6pt]{figures/arrow.png} \metagame & $.57_{\pm.01}$ & $.71_{\pm.01}$ & $.78_{\pm.01}$ & $.82_{\pm.01}$ \\
        & MaskSigLIP & $.53_{\pm.01}$ & $.54_{\pm.01}$ & $.42_{\pm.01}$ & $.29_{\pm.01}$ \\
        & \hspace{2pt}\includegraphics[width=6pt]{figures/arrow.png} \metagame & $\bm{.89_{\pm.01}}$ & $\bm{.88_{\pm.01}}$ & $\bm{.90_{\pm.01}}$ & $\bm{.91_{\pm.01}}$ \\
        & Grad-ESigLIP & $.68_{\pm.02}$ & $.48_{\pm.01}$ & $.33_{\pm.01}$ & $.25_{\pm.00}$ \\
        & \hspace{2pt}\includegraphics[width=6pt]{figures/arrow.png} \metagame & $.71_{\pm.02}$ & $.85_{\pm.01}$ & $.89_{\pm.01}$ & $.91_{\pm.01}$ \\
        & FIxLIP~\citep{baniecki2025explaining}& $.87_{\pm.01}$ & $.87_{\pm.01}$ & $.87_{\pm.01}$ & $.88_{\pm.01}$ \\
        \bottomrule
    \end{tblr}
\end{table}

\begin{figure}
    \centering
    \includegraphics[width=\linewidth]{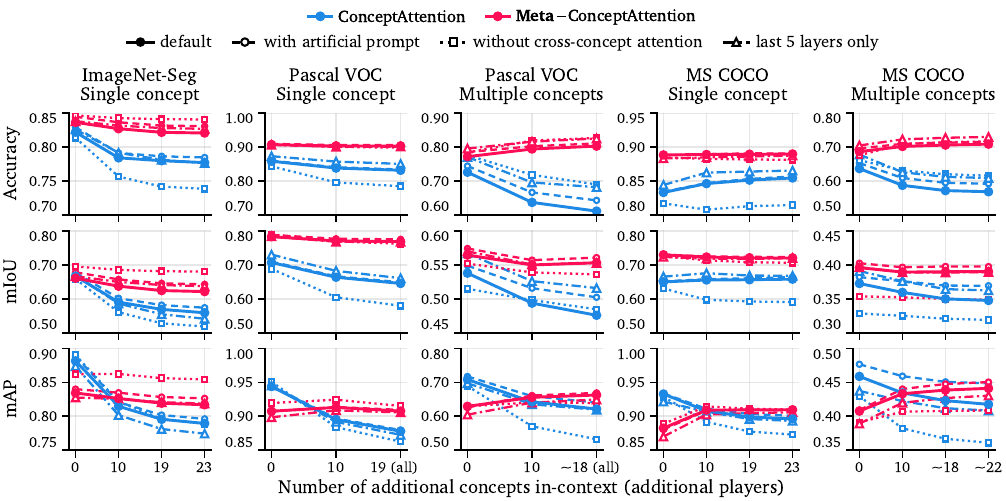}
    \caption{Ablations extending Table~\ref{tab:metaconceptattention}.
    \metagamex improves ConceptAttention for the FLUX.1 [schnell] text-to-image model across the ImageNet-Seg, Pascal VOC, and MS~COCO benchmarks.
    }
    \label{fig:metaconceptattention_extended}
\end{figure}

\end{document}